\documentclass{article}

\usepackage[preprint]{neurips_2026}

\usepackage[utf8]{inputenc} 
\usepackage[T1]{fontenc}    
\usepackage{xcolor}
\usepackage{hyperref}
\hypersetup{
    colorlinks=true,   
    linkcolor=red,     
    citecolor=green,     
    urlcolor=blue      
}
\usepackage{url}            
\usepackage{booktabs}       
\usepackage{amsfonts}       
\usepackage{nicefrac}       
\usepackage{microtype}      
\usepackage{array} 
\usepackage{graphicx}
\usepackage{amsmath}
\usepackage{cleveref}
\usepackage{tabularx}
\usepackage{multirow}
\usepackage{pifont}     
\usepackage{amssymb}
\usepackage{amsmath, amsthm}

\newtheorem{theorem}{Theorem}[section]
\newtheorem{proposition}{Proposition}[section] 
\title{FlowCoMotion: Text-to-Motion Generation via Token–Latent Flow Modeling}

\author{
  Dawei Guan\textsuperscript{1} \quad
  Di Yang\textsuperscript{1, 2} \quad
  Chengjie Jin\textsuperscript{1} \quad
  Jiangtao Wang\textsuperscript{1, 2} \\
  \textsuperscript{1} School of Artificial Intelligence and Data Science, University of Science and Technology of China\\
  \textsuperscript{2} Suzhou Institute for Advanced Research, University of Science and Technology of China\\
  \texttt{guand3841@gmail.com} \\
  \texttt{\{di.yang, cjsyp, wangjiangtao\}@ustc.edu.cn} \\
}

\begin{document}
\newcommand{\theHalgorithm}{\arabic{algorithm}}
\newcommand{\et}[2]{\ensuremath{#1^{\pm #2}}}
\newcommand{\etb}[2]{\ensuremath{\mathbf{#1}^{\pm #2}}}
\newcommand{\ets}[2]{\underline{\ensuremath{#1^{\pm #2}}}}

\maketitle

\begin{abstract}
Text-to-motion generation is driven by learning motion representations for semantic alignment with language. Existing methods rely on either continuous or discrete motion representations. However, continuous representations entangle semantics with dynamics, while discrete representations lose fine-grained motion details.
In this context, we propose \textbf{FlowCoMotion}, a novel motion generation framework that unifies both treatments from a modeling perspective. Specifically, FlowCoMotion employs token-latent coupling to capture both semantic content and high-fidelity motion details. In the latent branch, we apply multi-view distillation to regularize the continuous latent space, while in the token branch we use discrete temporal resolution quantization to extract high-level semantic cues. The motion latent is then obtained by combining the representations from the two branches through a token-latent coupling network.
Subsequently, a velocity field is predicted based on the textual conditions. An ODE solver integrates this velocity field from a simple prior, thereby guiding the sample to the potential state of the target motion. Extensive experiments show that FlowCoMotion achieves competitive performance on text-to-motion benchmarks, including HumanML3D and SnapMoGen.
\end{abstract}

\section{Introduction}
Text-to-motion generation, which diverse, temporally coherent human motions from natural language descriptions, has experienced notable progress in computer vision, graphics, and robotics. These advances have been enabled by large-scale text–motion datasets \cite{HumanML3D,Snapmogen,MotionMillion} and a variety of deep generative models, including VAEs \cite{HumanML3D,TEMOS}, autoregressive models \cite{ScaMo,CoMo,T2M-GPT,MotionGPT}, diffusion models \cite{mdm,MotionCraft,ReMoDiffuse,bad,MotionGPT3,StableMoFusion}, and mask generative models \cite{Momask,Snapmogen}. However, existing models exhibit substantial limitations when handling complex cues. They often struggle to achieve \textbf{fine-grained control} over motion extent and intensity (e.g.,``\textit{clockwise half circle}'' or ``\textit{ninety degrees}''), and to capture subtle variations in human motion. Autoregressive models use vector quantization, which introduces approximation errors, thus limiting the quality of motion generation. In autoregressive generation, tokens at each step is generated conditionally on previously generated context, and accumulate errors over time. While diffusion models can mitigate approximation error, it typically requires much denoising steps while inferring, making long-horizon motion synthesis computationally expensive and latency-intensive. Moreover, repeatedly refining high-dimensional trajectories can lead to temporal drift, so achieving high-fidelity often incurs substantial sampling cost.

\begin{figure*}[thbp]
    \centering
    \includegraphics[width=\linewidth]{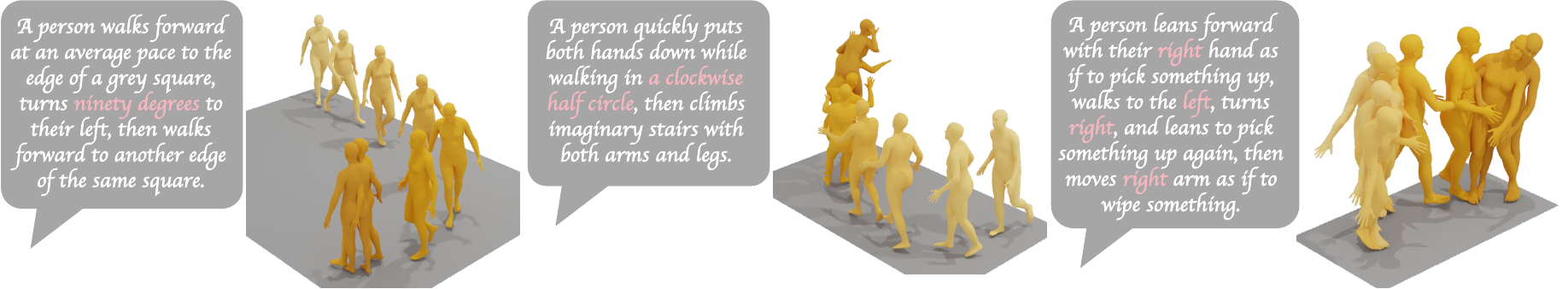}
    \caption{\footnotesize We propose a novel framework called \textbf{FlowCoMotion}, which can handle fine-grained semantic content to guide the generation of high-quality motion. This method is particularly suitable for handling various forms of input, including but not limited to numbers, shapes and directions.}
    \label{fig:teaser}
\end{figure*}

Based on these observations, we propose a novel framework called \textbf{FlowCoMotion} which can perform high-quality text-motion generation and understand precise text. FlowCoMotion combines discrete vectors obtained through residual vector quantization (RVQ) with continuous vectors and performs conditional generation via the flow matching strategy. Our framework is based on two components. First, we introduce a VAE with two branches, Token and Latent, to establish an accurate mapping between 3D motion and its corresponding token-latent coupling vector. Unlike previous methods that typically use hierarchical RVQ alone for iterative residual quantization or latent vector representation, our VAE design combines two distinct representations: the discrete tokens and the continuous latent, which are  generated from the token and latent branch, respectively. Second, the Transformer-based generation head parameterizes a text-conditional velocity field that transfers samples from a simple prior distribution to the target motion latent by flow matching. While diffusion methods also start with noise, they typically require multiple denoising iterations during the inference phase, increasing computational cost and potentially leading to long-term time drift accumulation. Therefore, an ODE solver integrates the predicted velocity field, mapping the prior to the conditional motion latent. Since the model predicts sequence-level dynamics rather than subsequent autoregressive updates, it leverages a full-context attention mechanism to better maintain long-range consistency while avoiding the multi-step sampling of diffusion methods.

Our main contributions can be summarized into three parts: 
(a). We propose FlowCoMotion, a framework for text‑motion generation that improves fine‑grained text alignment. Our method shows enhanced performance on text descriptions containing numeric values, directional cues, and geometric shapes.
(b). We introduce the token-latent coupling method to simultaneously guarantee the semantic content and dynamic features of motion representation. Besides, we also introduce a flow matching generation strategy for accurate and efficient generation. (c). Experimental results demonstrate that our FlowCoMotion achieves competitive performance in text-to-motion generation tasks. On the HumanML3D dataset, our R-precision values reach $0.55$, $0.74$, and $0.83$, respectively.

\section{Related Work}
\textbf{Text-to-Motion Generation.}
Recently, text-motion generation has made significant progress, improving the realism of motion and its alignment with text description. Early methods used variational autoencoders to model motion \cite{HumanML3D,TEMOS}. Vector quantization represents motion into discrete tokens, which can be generated using autoregressive methods \cite{ScaMo,CoMo,T2M-GPT,MotionGPT,GenM3} or mask-based methods \cite{Momask,Snapmogen}. Meanwhile, the diffusion methods \cite{mdm,MotionDiffuse,MLD,ReMoDiffuse} denoise the motion samples iteratively, making them better match the text description. Furthermore, some researchers have achieved more advanced semantic alignment capabilities using autoregressive diffusion models \cite{MotionGPT3}. Despite these advances, most work still relies on continuous or discrete representations, making it difficult to simultaneously utilize labeled semantic structure and continuous temporal dynamics. It is precisely because of this gap that we propose a labeled-latent coupling method.

\textbf{Hybrid Representations in Generative Modeling.} High-fidelity visual synthesis increasingly leverages the synergy between discrete and continuous representations. In image generation, VQ-GAN \cite{VQ_GAN} established a powerful hybrid paradigm: using codebooks to compress structural priors while performing continuous optimization in latent space. Similarly, video models like MAGVIT \cite{MAGVIT} and VideoLDM \cite{VideoLDM} employ 3D tokens for temporal consistency alongside continuous transitions for inter-frame smoothness. By bridging discrete semantic synthesis and continuous texture synthesis, these frameworks alleviate high-dimensional complexity while preserving fine perceptual details. This rationale provides the foundation for our token-latent coupling method. However, motion generation poses unique challenges not present in image or video generation: (1) Kinematics demand plausible joint rotations, (2) temporal dynamics must preserve continuity and causality, and (3) semantics entangle with trajectories. Unlike previous hybrid methods, our approach preserves continuous dynamics under kinematic regularization.


\begin{figure*}[thbp]
    \centering
    \includegraphics[width=\linewidth]{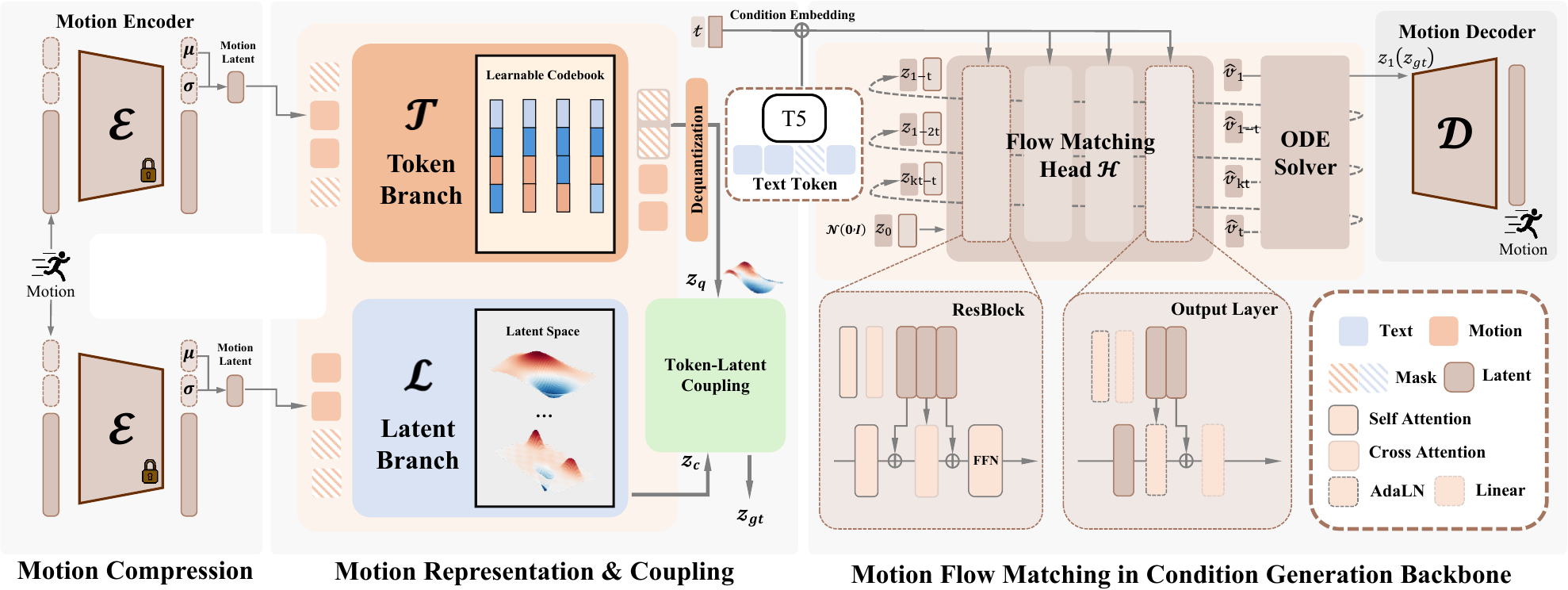}
    \caption{\footnotesize \textbf{Method Overview.} FlowCoMotion mainly includes three key components: (1) Motion compression (\Cref{section 3.1}): The motion sequence is compressed into the latent space using VAE to obtain $z \in \mathbb R^{n\times b}$. (2) Motion representation \& Coupling (\Cref{section 3.2,section 3.3}): The Token Branch further uses RVQ-VAE to extract discrete Token representations and then dequantizes them to obtain $z_{q}$. The Latent Branch directly uses the continuous latent space vector $z_{c}$. Finally, the final motion representation $z_{gt}$ is obtained through the token-latent coupling network. (3) Motion Flow Matching in Condition Generation (\Cref{section 3.4}): The flow velocity field $\hat v_{t}$ is predicted through the text condition $c$ and time $t$. The predicted latent vector $\hat z_{1}$ is solved by the ODE solver and finally handed over to the Decoder $\mathcal D$.}
    \label{fig:overview}
\end{figure*}

\section{Methodology}
We propose \textbf{FlowCoMotion}, a framework for efficient text-to-motion generation. 
As shown in \Cref{fig:overview}, FlowCoMotion consists of three components: (1) Motion Compression (\Cref{section 3.1}): A shared VAE encoder $\mathcal E$ maps motion sequences to a compact latent space $z$. (2) Motion Representation \& Coupling (\Cref{section 3.2,section 3.3}): The latent $z$ is processed by two parallel branches, a Token Branch that applies RVQ quantization to obtain discrete semantic tokens $z_q$, and a Latent Branch that preserves continuous dynamics $z_c$. A coupling network $\mathcal C$ fuses $(z_q, z_c)$ into a unified representation $z_{gt}$. (3) Motion Flow Matching (\Cref{section 3.4}): A text-conditional flow matching head $\mathcal H$ predicts the velocity field that maps noise to $z_{gt}$.

We use a VAE with encoder $\mathcal E$ and decoder $\mathcal D$ to map an $M$-frame motion sequence $m^{1:M}$ to a compact continuous latent $z\in\mathbb R^{d}$, and reconstruct it via $\mathcal D$. Given a text input with $N$ tokens $w^{1:N}$ which can be mapped into condition $C$, FlowCoMotion samples a noise latent conditioned on $C$ and applies the flow matching head $\mathcal H$ to predict the motion latent $\hat z$. The synthesized motion is then obtained by decoding $\hat m^{1:M}=\mathcal D(\hat z)$.

\subsection{Motion Compression}
\label{section 3.1}
We employ a VAE to compress the motion sequence into a continuous latent space. Our transformer-based VAE \cite{Momask}, comprising an encoder $\mathcal{E}$ and decoder $\mathcal{D}$ with long-stage skip connections, encodes the motion sequence into a compressed, information-rich latent space, preserving fine-grained dynamics and kinematic trends, and supporting high-fidelity, diverse motion generation.

For an input motion sequence $x^{1:L}$ of arbitrary length $L$, the encoder processes the motion vector and a set of learnable distribution labels to infer a Gaussian distribution with mean $\mu$ and standard deviation $\sigma$.

\subsection{Motion Representation in Discrete Token}
\label{section 3.2}
Accurately representing motion sequences is essential for motion generation tasks. We propose a \textbf{dual-representation} framework that combines token and latent representations, encoding motion sequences through two branches: one for discrete tokens and the other for continuous latent space to capture temporal details. Discretization inherently leads to information loss, reducing expressive power during quantization. To mitigate this, our framework integrates continuous latent space and discrete representations, enabling the simultaneous capture of subtle temporal variations and semantic content in motions.

For the \textbf{Token Branch}, we treat the token distribution as semantic anchors, performing quantization and dequantization. To represent motion semantics with discrete tokens, we train a motion quantizer $\mathcal{T}$ based on the VQ-VAE architecture, as applied in \cite{T2M-GPT,MotionGPT,Momask,MotionStreamer}. These approaches enable efficient condensation of motion semantics, facilitating fusion with continuous temporal details for motion generation tasks.

To enable the Token Branch as a semantic anchor, we employ an RVQ-VAE quantizer architecture. Multiple quantization layers are introduced to perform quantization at different temporal resolutions, with all layers sharing a single codebook \cite{Snapmogen}. For a motion latent feature sequence $z \in \mathbb{R}^{n \times d}$, the quantizer applies a series of residual quantization operations with progressively increasing temporal resolutions $\{h^v\}_{v=0}^V$, where $h^0 < \dots < h^V = n$. The quantized features $z_q \in \mathbb{R}^{h_0 \times d}$ at the coarse-grained level are restored to full resolution via bilinear interpolation: $\hat{f}_{\uparrow}^v = \mathcal{I}(f^v, h^v)$. Residuals are computed and downsampled to the next scale for quantization by subsequent layers.
\begin{equation}
\hat{z}_{q}=\sum_{v=0}^V\mathcal{I}(\hat{f}^v,h^v)
\end{equation}
Overall, this Quantizer trained using latent embedding loss (commit loss) on each quantization scale $h\in\{h_{0},\ldots,h_{V}\}$:
\begin{equation}
\mathcal{L}_{\mathrm{commit}}^{(h)}=\frac{1}{|\mathcal{M}_h|}\sum_{m\in\mathcal{M}_h}\left\|z_e^{(m,h)}-\mathbf{e}_{k^{(m,h)}}\right\|_2^2
\end{equation}
Where $z_e^{(m,h)}$ represents the continuous latent variable of the encoder at scale $h$ for motion sample $m$, and $\mathbf{e}_{k^{(m,h)}}$ is the selected vector from the codebook, determined by nearest neighbor quantization. $\mathcal{M}_h$ denotes the set of valid locations at scale $h$, and $|\mathcal{M}_h|$ is the number of its elements. By employing temporal resolution sampling, our Token Branch effectively samples semantic anchors at different time intervals, optimizing its fidelity and versatility in motion generation tasks.

\subsection{Motion Representation in Continuous Latent}
\label{section 3.3}
In the \textbf{Latent Branch}, a latent vector $z \in \mathbb{R}^{m}$ is extracted from the distribution using a reparameterization technique. This latent vector is fused with the Token Branch output and fed into the decoder $\mathcal{D}$ to generate the reconstructed motion sequence. The reconstruction training process is detailed in \Cref{section 3.5}.

\textbf{Multi-view Distillation.} Relying solely on reconstruction and quantization constraints may lead to latent space inconsistencies under varying pruning/enhancement conditions, reducing generalization for downstream conditional generation. To address this, we propose a multi-view teacher-student self-distillation framework \cite{self-supervised,self-distill}, constructing a multi-view set $v$ for the same sequence. The teacher processes the global view set $v_g$, while the student processes $v$. After passing through projection heads $g_{t}$ and $g_{s}$, and decentralization, we obtain the probability distribution.
\begin{equation}
        \mathbf{p}_t(v) = \text{Softmax}\left(\frac{\scriptstyle g_t(\mathcal E_t(v)) - \mathbf{c}}{\tau_t}\right) \quad
        \mathbf{p}_s(v) = \text{Softmax}\left(\frac{\scriptstyle g_s\left(\phi(\mathcal E_s(v))\right)}{\tau_s}\right)
\end{equation}

Where $\phi(\cdot)$ denotes the polymerization operator for distillation, and $\mathbf{c}$ represents the center vector, updated using exponential moving average to prevent characterization collapse. The distillation loss \cite{Distill} is computed on a global view using the KL divergence between two probability distributions. The final latent sequence approximation $\hat z_{q}$ is the sum of all up-interpolated quantized sequences, which is passed through a \textbf{token-latent coupling network} $\mathcal{C}$ with the latent branch prediction $\hat{z}_c$, and subsequently fed into the Decoder $\mathcal{D}$ to predict the motion sequence.
\begin{equation}
\hat{\mathbf{m}}=\mathcal{D}(\mathcal C(\hat{z}_{q}, \hat z_{c}))
\end{equation}
Through multi-view distillation, our Latent Branch is effectively characterized, enhancing its capacity to capture motion details and optimize motion generation performance. In Appendix \Cref{sec:theory_coupling}, we provide a concrete proof of the theoretical feasibility of token-latent coupling.

\subsection{Motion Flow Modeling in Condition Generation}
\label{section 3.4}

To improve fidelity and diversity in motion generation, we propose a text-conditioned flow matching strategy that maps random noise to a token-latent coupled latent space. Building on recent advances in flow matching generative models, our approach integrates Rectified Flow \cite{FlowMatching,RectifiedFlow} with a conditional Transformer backbone. The flow matching head predicts the velocity field in the motion latent space with high accuracy, introducing minimal inference overhead.

\begin{figure*}[htbp]
    \centering
    \includegraphics[width= 0.5\linewidth]{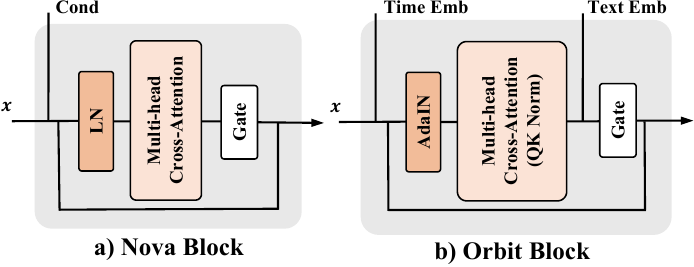}
    \caption{\footnotesize \textbf{Two Conditional Transformer Blocks for Motion Generation.} (a) The \textbf{Nova} Block integrates a unified conditioning signal via LN-modulated cross-attention and uses a learnable gate to control the strength of the conditioned update on the hidden state $x$. (b) The \textbf{Orbit} Block separates conditioning into time and text inputs, performing cross-attention with $QK$ normalization for improved stability. It also applies the same learnable gate to modulate the conditioned residual.}
    \label{fig:transformer}
\end{figure*}

\textbf{Flow Matching Process.} We apply flow matching principles to represent complex probability distributions within the motion latent space. Instead of adding Gaussian noise over discrete time steps as in diffusion, we construct a continuous probability path connecting a simple prior to the real data latent variables over time $t \in [0,1]$, resulting in a sequence of intermediate states ${z_t}$. Given a real latent variable $z_1$ from encoding real motions and a random sample $z_0 \sim \mathcal{N}(0,I)$ from a standard Gaussian distribution, we generate intermediate samples via a simple linear path.
\begin{equation}
z_t=tz_1+(1-t)z_0
\end{equation}
$t$ controls the interpolation position. Unlike Diffusion process, flow matching requires specifying a target instantaneous velocity for each $z_t$, which can be viewed as a shortest straight line in the path.
\begin{equation}
u_{t}=\frac{dz_{t}}{dt}=z_{1}-z_{0}
\end{equation}
The conditional flow model $v_\theta(z_t,t,c)$, where $c$ is the auxiliary input, is trained to predict the instantaneous velocity $u_t$, learning the transformation from the prior to the data distribution over continuous time. In our setup, $c$ corresponds to the output hidden state of the Transformer backbone, providing conditional information that modulates the vector field. We use \texttt{T5-base} \cite{T5} to extract word-level features from complex textual descriptions. As shown in \Cref{fig:transformer}, we present two types of Transformers for conditional generation, each injecting distinct conditions. More details for our model are provided in Appendix \Cref{appendix_2models}. The  objective is based on the flow matching loss.
\begin{equation}
\mathcal{L}=\mathbb{E}_{z_0,z_1,t}\left[\left\|(z_{1}-z_{0})-v_\theta(z_t,t,c)\right\|_2^2\right]
\end{equation}
where $z_{1}$ is sampled from the token-latent coupling network using the ground truth motion latent.

\textbf{Inference Procedure.} We begin by sampling initial latent variables $z_0 \sim \mathcal{N}(0,I)$ from random noise and providing a conditional input $c$. Next, we define a continuous time generative dynamical system using the learned conditional velocity field $v_\theta(z,t,c)$. We obtain the generated latent variables $\hat z_1$ by solving ordinary differential equations (ODEs) to integrate the state from $t=0$ to $t=1$. Finally, $\hat z_1$ is input into the motion decoder $\mathcal D$ to reconstruct the final motion sequence. To ensure efficient conditional generation, we use a fixed-step \textbf{ODE solver} with small sampling steps to map the data distribution.

\begin{figure*}[t]
    \centering
    \includegraphics[width=\linewidth]{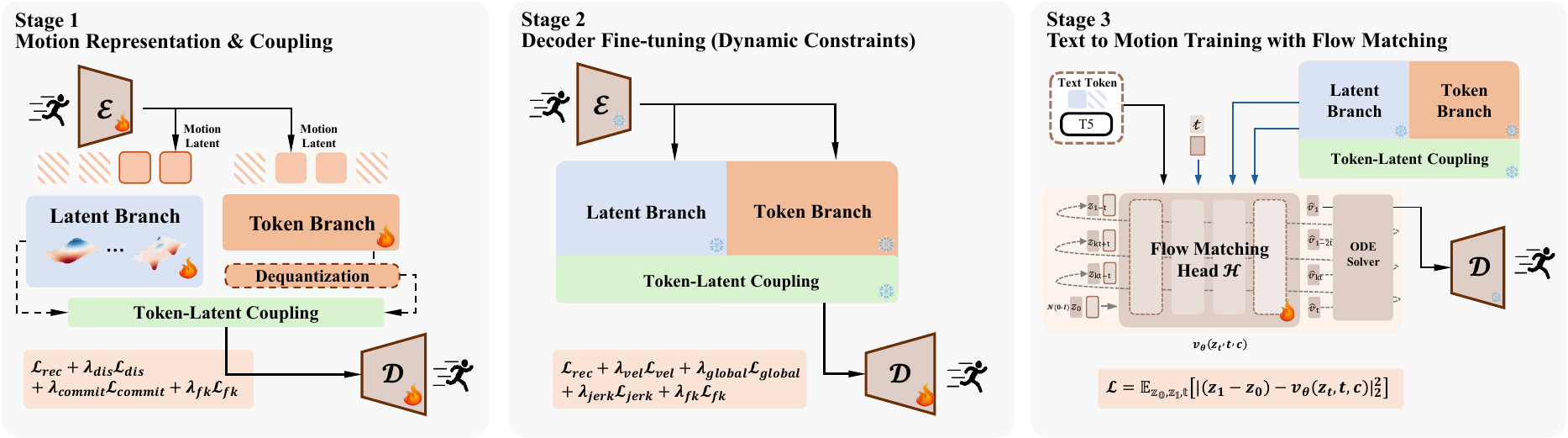}
    \caption{\footnotesize We propose a \textbf{three-stage training} strategy for the motion-language model: \textbf{Stage 1:} Train the Encoder $\mathcal E$, Decoder $\mathcal D$, and learnable latent space, refining the latent space representation. \textbf{Stage 2:} Freeze the Encoder and learnable latent space, and fine-tune the Decoder by kinematic losses. \textbf{Stage 3:} Train the flow matching head $\mathcal H$ to predict the velocity flow from random noise to the motion latent space. \textbf{Inference process:} Conditions are injected at each block, and the predicted latent vector is obtained by solving the ODE through $N$ steps after sampling from random noise. Finally, it is fed into the Decoder to obtain the predicted motion sequence.}
    \label{fig:trainstage}
\end{figure*}

\subsection{Training Procedure Overview}
\label{section 3.5}
Fully end-to-end training can destabilize the latent space and weaken kinematic constraints. We therefore adopt a three-stage training pipeline that progressively learns the motion representation, stabilizes kinematic reconstruction, and enables diverse generation. As shown in \Cref{fig:trainstage}, \textbf{Motion Representation \& Coupling} learns motion sequence representations that support both reconstruction and generation. \textbf{Decoder Fine-tuning with Dynamic Constraints} freezes the learned motion representation and fine-tunes the Decoder to better satisfy kinematic requirements during reconstruction. \textbf{Text-to-Motion Training with Flow Matching} trains the generation head to predict the velocity field that samples into the learned motion representation.

\textbf{Motion Representation \& Coupling.} To map motions into our predefined latent space and achieve accurate reconstruction, we first train the VAE, Token Branch, and Latent Branch. This stage prioritizes enriching the latent representation while minimizing the impact of kinematic constraints. Using the multi-view distillation, we encourage a richer mapping from motion sequences to the latent space. Following \cite{T2M-GPT,fkloss}, we optimize the motion tokenizer with four losses: $\mathcal L= \mathcal L_{rec}+\lambda_{dis}\mathcal L_{dis}+\lambda_{commit}\mathcal L_{commit}+\lambda_{fk}\mathcal L_{fk}$, where $\mathcal L_{rec}$ is the reconstruction loss, $\mathcal L_{commit}$ is the commitment loss, $\mathcal L_{dis}$ is the multi-view distillation loss, and $\mathcal L_{fk}$ is the forward kinematics (FK) loss.

\textbf{Decoder Fine-tuning with Dynamic Constraints.} In this stage, we freeze the encoder, token \& latent branches, and introduce kinematic constraints to fine-tune the decoder. Specifically, we apply a series of kinematic constraints, including jerk loss and global loss. This stage preserves the effective motion sequence representation from the first stage, enabling the decoder to better reconstruct the motion sequence. In summary, this stage performs rigorous kinematic fine-tuning on the decoder, with a loss of $\mathcal L= \mathcal L_{rec}+\lambda_{vel}\mathcal L_{vel}+\lambda_{global}\mathcal L_{global}+\lambda_{jerk}\mathcal L_{jerk}+\lambda_{fk}\mathcal L_{fk}$, where the jerk loss $\mathcal L_{jerk}$, the global loss $\mathcal L_{global}$, the velocity loss $\mathcal L_{vel}$.

\textbf{Text-to-Motion Training with Flow Matching.} In the final stage, we encode the text caption with \texttt{T5-base} and feed it to the flow matching head. We freeze all networks used for motion reconstruction. Conditioned on time and text, the head models the sampled path by predicting the velocity field. We train it with an \texttt{L2} loss between the predicted and ground truth velocity. This formulation leverages the sequence reconstruction capability learned in earlier stages while preserving inference efficiency.

\begin{figure*}[ht]
    \centering
    \includegraphics[width=\linewidth]{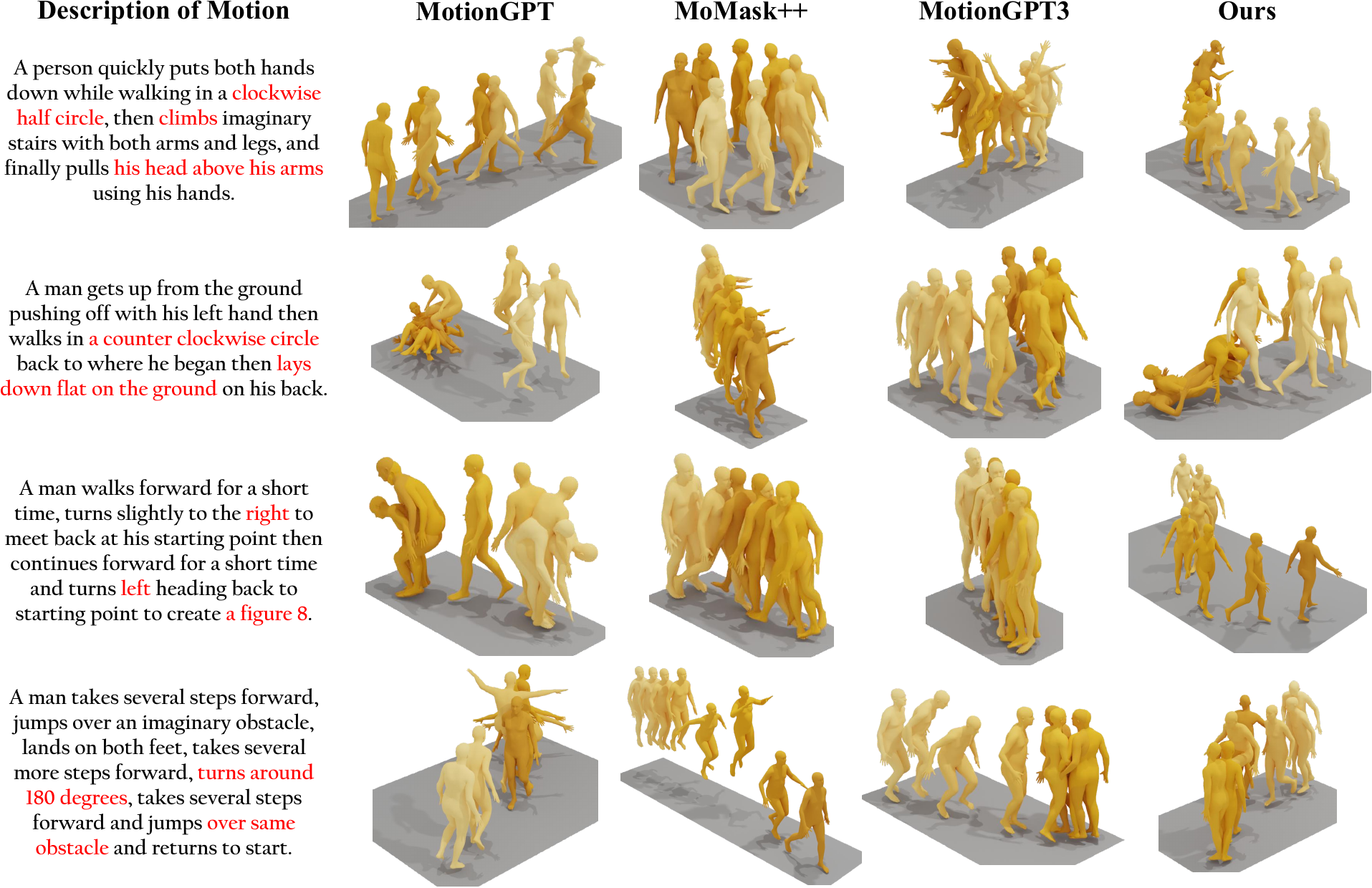}
    \caption{\footnotesize \textbf{Qualitative evaluation on text-to-motion generation.} We qualitatively compared the visualizations generated by our method with those produced by MotionGPT \cite{MotionGPT}, MoMask++ \cite{Snapmogen}, and MotionGPT3 \cite{MotionGPT3}. Misaligned motions of other methods are marked with red text. The results show that \textbf{FlowCoMotion} demonstrates powerful text understanding capabilities, enabling it to generate more accurate and coherent motions.}
    \label{fig:experiment1}
\end{figure*}

\section{Experiment}
Extensive experiments demonstrate the superior performance of our model, \textbf{FlowCoMotion}, in motion generation tasks. Details of the dataset settings, evaluation metrics, and implementation specifics are provided in \Cref{section 4.1}. We first present the performance of text-to-motion generation in \Cref{section 4.2}, through both quantitative and qualitative experiments. Next, in \Cref{section 4.3}, we will present more ablation experiments to demonstrate the feasibility of our method. The supplements include additional results and further implementation details.

\subsection{Experimental Setup}
\label{section 4.1}
\textbf{Datasets.} Our experiments are based on the HumanML3D and SnapMoGen \cite{HumanML3D,Snapmogen} datasets. HumanML3D is a large-scale benchmark for text-to-motion generation and understanding, containing 14,616 motion sequences from AMASS \cite{AMASS} and HumanAct12, annotated with 44,970 sequence-level natural language descriptions. SnapMoGen includes 43.7 hours of high-quality motion data at 30 frames per second, with each motion segment accompanied by 6 detailed text descriptions, including daily tasks, fitness, social interactions, and dance.

\textbf{Evaluation Metrics.} We evaluate motion quality using Fréchet Inception Distance (FID), which measures how closely generated motions align with ground truth motions in feature space, reflecting overall motion quality. Diversity quantifies feature space variation across generated samples, while MultiModality captures variation among motions generated from the same text description. To assess text-motion alignment, we use motion-text retrieval R-Precision and Multimodal Distance, which measures the embedding-space distance between paired motion and text embeddings.

\textbf{Implementation Details.} FlowCoMotion consists of a motion VAE and a Transformer-based flow matching head. The VAE has 6 layers and encodes each sequence into a 256 dimension latent, split into a 64 dimension token branch and a 192 dimension continuous branch. The flow matching head uses 15 Transformer layers with 8 attention heads and a 512 hidden size. We perform inference with Rectified Flow sampling using 40 euler steps. The VAE is trained for 400 epochs and the flow head for 1000 epochs, on a single NVIDIA RTX 4090 GPU. More details are reported in Appendix \Cref{More_details}.

\begin{table*}[htbp]
    \caption{\footnotesize\textbf{Quantitative evaluation on \text{HumanML3D} test set.}}
    \centering
    \scalebox{0.75}{
    \begin{tabular}{l c c c c c c c}
    \toprule
     \multirow{2}{*}{Methods}  & \multicolumn{3}{c}{R-precision$\uparrow$} & \multirow{2}{*}{FID$\downarrow$} & \multirow{2}{*}{MultiModal Dist$\downarrow$} & \multirow{2}{*}{Diversity$\rightarrow$} & \multirow{2}{*}{MModality$\uparrow$}\\
    \cline{2-4}
     ~ & Top 1 & Top 2 & Top 3 \\
    \midrule
       Real motions & \et{0.511}{.003} & \et{0.703}{.003} & \et{0.797}{.002} & \et{0.002}{.000} & \et{2.974}{.008} & \et{9.503}{.065} & - \\
    \midrule
    TM2D~ & \et{0.319}{.000} & - & - & \et{1.021}{.000} & \et{4.098}{.000} & \etb{9.513}{.000} & \etb{4.139}{.000} \\
    MotionCraft~ & \et{0.501}{.003} & \et{0.697}{.003} & \et{0.796}{.002} & \et{0.173}{.002} & \et{3.025}{.008} & \et{9.543}{.098} & - \\
    ReMoDiffuse~ & \et{0.510}{.005} & \et{0.698}{.006} & \et{0.795}{.004} & \et{0.103}{.004} & \et{2.974}{.016} & \et{9.018}{.075} & \et{1.795}{.043} \\
    MMM~ & \et{0.504}{.003} & \et{0.696}{.003} & \et{0.794}{.002} & \et{0.080}{.003} & \et{2.998}{.007} & \et{9.411}{.058} & \et{1.164}{.041} \\
    MotionGPT~ & \et{0.492}{.003} & \et{0.681}{.003} & \et{0.778}{.002} & \et{0.232}{.008} & \et{3.096}{.008} & \et{9.528}{.071} & \et{2.008}{.084} \\
    MotionGPT-2~ & \et{0.496}{.002} & \et{0.691}{.003} & \et{0.782}{.004} & \et{0.191}{.004} & \et{3.080}{.013} & \et{9.860}{.026} & \et{2.137}{.022} \\
    DiverseMotion~ & \et{0.515}{.003}  & \et{0.706}{.002} & \et{0.802}{.002} & \et{0.072}{.004} & \et{2.941}{.007} & \et{9.683}{.102} & \et{1.869}{.089} \\
    BAD~ & \et{0.517}{.002} & \et{0.713}{.003} & \et{0.808}{.003} & \et{0.065}{.003} & \et{2.901}{.008} & \et{9.694}{.068} & \et{1.194}{.044} \\  
    MoMask~ & \et{0.521}{.002} & \et{0.713}{.002} & \et{0.807}{.002} & \etb{0.045}{.002} & \et{2.958}{.008} & \et{9.620}{.064} & \et{1.241}{.040} \\
    MoMask++~ & \et{0.528}{.003} & \et{0.718}{.003} & \et{0.811}{.002} & \et{0.072}{.003} & \et{2.912}{.008} & \et{9.573}{.062} & \et{1.227}{.046} \\
    MotionGPT3~ & \ets{0.543}{.003} & \ets{0.735}{.002} & \ets{0.828}{.002} & \et{0.217}{.010} & \etb{2.793}{.007} & \et{9.662}{.072} & \et{1.366}{.046} \\
    \midrule
    Ours$^\text{nova}$  & \et{0.532}{.002} & \et{0.725}{.002} & \et{0.819}{.002} & \et{0.072}{.003} & \et{2.887}{.006} & \et{9.377}{.078} & \et{1.875}{.088} \\
    Ours$^\text{orbit}$  & \etb{0.550}{.003} & \etb{0.740}{.003} & \etb{0.832}{.002} & \ets{0.061}{.003} & \ets{2.852}{.008} & \ets{9.457}{.067} & \ets{2.071}{.093} \\
    \bottomrule
    \end{tabular}
    }
    \label{table_hml}
\end{table*}

\subsection{Comparison to State-of-the-art Approaches}
\label{section 4.2}
We compared our method with a range of existing state-of-the-art works, including autoregressive models \cite{TM2D,MMM,MotionGPT,motiongpt2,T2M-GPT,MARDM}, diffusion models \cite{mdm,MotionCraft,ReMoDiffuse,bad,StableMoFusion}, mask generation models \cite{Momask,Snapmogen}, and autoregressive diffusion model \cite{MotionGPT3}.

\begin{table*}[t]
    \caption{\footnotesize\textbf{Quantitative evaluation on \text{SnapMoGen} test set.}}
    \centering
    \scalebox{0.85}{
    \begin{tabular}{l c c c c c c}
    \toprule
     \multirow{2}{*}{Methods}  & \multicolumn{3}{c}{R-precision$\uparrow$} & \multirow{2}{*}{FID$\downarrow$} & \multirow{2}{*}{CLIP Score$\uparrow$} & \multirow{2}{*}{MModality$\uparrow$}\\
    \cline{2-4}
     ~ & Top 1 & Top 2 & Top 3 \\
    \midrule
       Real motions & \et{0.940}{.001} & \et{0.976}{.001} & \et{0.985}{.001} & \et{0.001}{.000} & \et{0.837}{.000} & - \\
    \midrule
     MDM~ & \et{0.503}{.002} & \et{0.653}{.002} & \et{0.727}{.002} & \et{57.783}{.092} & \et{0.481}{.001} & \etb{13.412}{.231} \\
     T2M-GPT~ & \et{0.618}{.002} & \et{0.773}{.002} & \et{0.812}{.002} & \et{32.629}{.087} & \et{0.573}{.001} & \et{9.172}{.181} \\
     StableMoFusion~ & \et{0.679}{.002} & \et{0.823}{.002} & \et{0.888}{.002} & \et{27.801}{.063} & \et{0.605}{.001} & \et{9.064}{.138}  \\
     MARDM~ & \et{0.659}{.002} & \et{0.812}{.002} & \et{0.860}{.002} & \et{26.878}{.131} & \et{0.602}{.001} & \et{9.812}{.287}  \\
     MoMask~ & \ets{0.777}{.002} & \et{0.888}{.002} & \et{0.927}{.002} & \et{17.404}{.051} & \et{0.664}{.001} & \et{8.183}{.184}  \\
    MoMask++ & \etb{0.805}{.002} & \etb{0.904}{.002} & \etb{0.938}{.001} & \ets{15.560}{.071} & \etb{0.684}{.001}  & \et{6.556}{.178}  \\
    \midrule
     Ours$^\text{nova}$  & \et{0.737}{.002} & \et{0.873}{.002} & \et{0.929}{.002} & \et{17.544}{.068} & \et{0.654}{.001} & \ets{9.190}{.240} \\
     Ours$^\text{orbit}$  & \et{0.776}{.002} & \ets{0.890}{.002} & \ets{0.934}{.001} & \etb{14.678}{.061} & \ets{0.672}{.001} & \et{8.553}{.230} \\
    \bottomrule
    \end{tabular}
    }
    \label{table_snap}
\end{table*}


\textbf{Qualitative Comparison.} \Cref{fig:experiment1} presents a qualitative comparison between our method, MotionGPT, MoMask++, and MotionGPT3. While MotionGPT generates semantically correct motions, it struggles with direction- and degree-related descriptions, such as ``clockwise half circle'' and ``in figure 8''. Although MoMask++ and MotionGPT3 show improvements, they still fail to fully align motions with the text descriptions. Additionally, MoMask++ occasionally produces stiff and repetitive motions. In contrast, our method generates high-quality motions that are highly consistent with the input text. Further comparative results can be found in Appendix \Cref{more_results}.

\textbf{Quantitative Comparisons.} Following prior work, we repeat each experiment 20 times and report the mean with a $95\%$ confidence interval. We evaluate two conditional Transformer blocks, termed \textbf{Nova} and \textbf{Orbit}. Results on HumanML3D and SnapMoGen are reported in \Cref{table_hml,table_snap}. FlowCoMotion achieves competitive or better performance on the core metrics, indicating robust text-to-motion generation. On HumanML3D, the \textbf{Nova} transformer already matches strong baselines, and \textbf{Orbit} further improves performance. Besides, we report a full comparison with prior T2M approaches in Appendix \Cref{more_evaluation}. 

\begin{table*}[htbp]
    \caption{\footnotesize \textbf{Ablation analysis of motion representation on Text-to-Motion Tasks.}}
    \centering
    \scalebox{0.75}{
    \begin{tabular}{c c c c c c c c c}
    \toprule
    & \multicolumn{4}{c}{VQ Config.} & \multicolumn{2}{c}{VQ Reconstruction} & \multicolumn{2}{c}{T2M Generation} \\
    \cline{2-5} \cline{6-7} \cline{8-9}
    & \footnotesize\#Codes & \footnotesize\#Quant. & \footnotesize F/M & \footnotesize w/ Att. 
    & \footnotesize FID $\downarrow$ & \footnotesize Joint Pos. Err. $\downarrow$ 
    & \footnotesize FID $\downarrow$ & \footnotesize R-Precison@3 $\uparrow$ \\
    \cmidrule{2-9}
    \multirow{7}{*}[4.0ex]{\rotatebox[origin=c]{90}{HumanML3D}} & 1024 & 4 & M & \checkmark & 0.0622 & 0.0488 & \et{0.122}{.002} & \et{0.809}{.003} \\ 
    & 512 & 4 & M & \checkmark & 0.0859 & 0.0514 & \et{0.218}{.003} & \et{0.771}{.002} \\
    \cmidrule{2-9}
    & \multicolumn{2}{c}{Representation} & \multicolumn{2}{c}{Dimension} 
    & \multicolumn{2}{c}{VQ Reconstruction} & \multicolumn{2}{c}{T2M Generation} \\
    \cline{2-3} \cline{4-5} \cline{6-7} \cline{8-9}
    & \multicolumn{2}{c}{\footnotesize Type} & \multicolumn{2}{c}{\footnotesize Dim.} 
    & \footnotesize FID $\downarrow$ & \footnotesize Joint Pos. Err. $\downarrow$ 
    & \footnotesize FID $\downarrow$ & \footnotesize R-Precison@3 $\uparrow$ \\
    \cmidrule{2-9}
    & \multicolumn{2}{c}{Latent} & \multicolumn{2}{c}{256} & 0.0054 & 0.0302 & \et{0.087}{.004} & \et{0.796}{.003} \\  
    & \multicolumn{2}{c}{Hybrid} & \multicolumn{2}{c}{64, 192} & 0.0036 & 0.0234 & \et{0.061}{.003} & \et{0.832}{.002} \\ 
    \midrule
    & \multicolumn{4}{c}{VQ Config.} & \multicolumn{2}{c}{VQ Reconstruction} & \multicolumn{2}{c}{T2M Generation} \\
    \cline{2-5} \cline{6-7} \cline{8-9}
    & \footnotesize\#Codes & \footnotesize\#Quant. & \footnotesize F/M & \footnotesize w/ Att. 
    & \footnotesize FID $\downarrow$ & \footnotesize Joint Pos. Err. $\downarrow$ 
    & \footnotesize FID $\downarrow$ & \footnotesize R-Precison@3 $\uparrow$ \\
    \cmidrule{2-9}
    \multirow{8}{*}[4.0ex]{\rotatebox[origin=c]{90}{SnapMoGen}} & 1024 & 2 & M & \checkmark & 4.26 & 8.48 & \et{26.248}{.060} & \et{0.911}{.002} \\
    & 512 & 2 & M & \checkmark & 4.41 & 8.62 & \et{28.013}{.072} & \et{0.903}{.001} \\
    \cmidrule{2-9}
    & \multicolumn{2}{c}{Representation} & \multicolumn{2}{c}{Dimension} 
    & \multicolumn{2}{c}{VQ Reconstruction} & \multicolumn{2}{c}{T2M Generation} \\
    \cline{2-3} \cline{4-5} \cline{6-7} \cline{8-9}
    & \multicolumn{2}{c}{\footnotesize Type} & \multicolumn{2}{c}{\footnotesize Dim.} 
    & \footnotesize FID $\downarrow$ & \footnotesize Joint Pos. Err. $\downarrow$ 
    & \footnotesize FID $\downarrow$ & \footnotesize R-Precison@3 $\uparrow$ \\
    \cmidrule{2-9}
    & \multicolumn{2}{c}{Latent} & \multicolumn{2}{c}{256} & 0.1650 & 4.3650 & \et{19.465}{.037} & \et{0.895}{.002} \\  
    & \multicolumn{2}{c}{Hybrid} & \multicolumn{2}{c}{64, 192} & 0.0503 & 4.4234 & \et{14.678}{.061} & \et{0.934}{.001} \\ 
    \bottomrule
    \end{tabular}
     }
    \label{tab:alb_1}
\end{table*}

\subsection{Ablation Studies}
\label{section 4.3}
Our research proposes a new strategy for integrating discrete motion representation and continuous motion, and using a flow matching head for conditional generation. This strategy consists of two key components: 1) Token-Latent Coupling, 2) Conditional generation using flow matching. More ablation studies are shown in Appendix \Cref{appden:abl}.  

\textbf{Token-Latent Coupling.} To evaluate the utility of Token-Latent Coupling, we compared our model with variants of motion tokens that were discretized using vector quantization alone (\textit{token dimension = 256}), and with variants that used only continuous motion Latents (\textit{latent dimension = 256}). As shown in \Cref{tab:alb_1}, under the same training iterations, the mixed representation significantly outperformed the other variants in both the motion reconstruction and generation tasks. Particularly for the transformation from text to motion, discretization would disrupt the temporal coherence between different motion segments and introduce quantization noise, which would hinder the generation quality. In contrast, the mixed latent vectors retain a compact and expressive semantics, enabling smoother generation, and can be better aligned with the text input.

\begin{table*}[htbp]
    \caption{\footnotesize \textbf{Ablation analysis on HumanML3D test set of motion representation and generation heads.}}
    \centering
    \scalebox{0.72}{
    \begin{tabular}{c c c c c c c c c}
    \toprule
    \multicolumn{2}{c}{Representation} & \multicolumn{2}{c}{Dimension} 
    & \multicolumn{2}{c}{VQ Reconstruction} & \multicolumn{3}{c}{T2M Generation} \\
    \cline{1-2}
    \cline{3-4}
    \cline{5-6}
    \cline{7-9}
    \multicolumn{2}{c}{\footnotesize Type} & \multicolumn{2}{c}{\footnotesize Dim.} 
    & \footnotesize FID $\downarrow$ & \footnotesize Joint Pos. Err. $\downarrow$ 
    & \footnotesize FID $\downarrow$ & \footnotesize R-Precison@3 $\uparrow$ & \footnotesize Inference time $\downarrow$\\
    \midrule
    \multicolumn{2}{c}{Latent + Flow Matching} & \multicolumn{2}{c}{256} & 0.0054 & 0.0302 & \et{0.087}{.004} & \et{0.796}{.003} &  0.49 \\ 
    \multicolumn{2}{c}{Hybrid + Flow Matching} & \multicolumn{2}{c}{64, 192} & 0.0036 & 0.0234 & \et{0.061}{.003} & \et{0.832}{.002} &  0.52 \\ 
    \midrule
    \multicolumn{2}{c}{Latent + Diffusion} & \multicolumn{2}{c}{256} & 0.0054 & 0.0302 & \et{0.078}{.003} & \et{0.781}{.002} &  7.66 \\  
    \multicolumn{2}{c}{Hybrid + Diffusion} & \multicolumn{2}{c}{64, 192} & 0.0036 & 0.0234 & \et{0.075}{.003} & \et{0.801}{.002} &  8.02 \\ 
    \bottomrule
    \end{tabular}
    }
    \label{tab:abl2}
\end{table*}

\textbf{Flow Modeling Generation.} To assess the benefits of the flow modeling approach, we compared our model with a variant generated using the standard diffusion method. As shown in \Cref{tab:abl2}, under the same training iterations, the method using flow modeling significantly outperformed the diffusion model. Additionally, the inference speed of the flow modeling method was also significantly faster than that of the diffusion method.

\begin{table*}[htbp]
    \caption{\footnotesize \textbf{Quantitative evaluation on harder test set which selected from HumanML3D.}}
    \centering
    \scalebox{0.7}{
    \begin{tabular}{l c c c c}
    \toprule
     \multirow{2}{*}{Methods}  & \multicolumn{3}{c}{R-precision$\uparrow$} & \multirow{2}{*}{FID$\downarrow$} \\
    \cline{2-4}
     ~ & Top 1 & Top 2 & Top 3 \\
    \midrule
    MoMask~ & \et{0.467}{.002} & \et{0.655}{.003} & \et{0.748}{.003} & \etb{0.231}{.012} \\
    MoMask++~ & \et{0.473}{.003} & \et{0.669}{.003} & \et{0.769}{.002} & \et{0.354}{.015} \\
    MotionGPT3~ & \ets{0.496}{.002} & \ets{0.685}{.002} & \ets{0.787}{.003} & \et{0.658}{.032} \\
    \midrule
    Ours$^\text{nova}$~ & \et{0.479}{.002} & \et{0.678}{.003} & \et{0.776}{.002} & \et{0.349}{.014} \\
    Ours$^\text{orbit}$~ & \etb{0.502}{.003} & \etb{0.693}{.004} & \etb{0.796}{.002} & \ets{0.332}{.011} \\
    \bottomrule
    \end{tabular}
    }
    \label{tab:harder_hml}
\end{table*}

\subsection{Evaluation on Fine-Grained Text Understanding}
To rigorously assess our model's capability in comprehending complex and detailed language instructions, we conduct an attribute-specific analysis on a constructed ``Hard Test Set''. Specifically, we filter the standard HumanML3D test set to isolate prompts containing fine-grained constraints \cite{Fg_T2M,CoMo, GraphMotion}, such as directional indicators (e.g., left/right), rotational semantics (e.g., clockwise/counterclockwise), and precise numerical or degree-based instructions. The details split the hard test set are provided in Appendix \Cref{llm_split}. As detailed in \Cref{tab:harder_hml}, FlowCoMotion demonstrates a distinct advantage in these highly constrained scenarios. It achieves the best text-motion alignment on this subset. Additionally, FlowCoMotion maintains a competitive FID, significantly outperforming both MoMask++ and MotionGPT3 in motion quality under strict textual conditions. These quantitative results confirm that our approach effectively balances high-fidelity motion synthesis with precise semantic grounding, successfully addressing the complex attribute binding that frequently challenges existing methods.

\section{Conclusion}
In Conclusion, we propose FlowCoMotion, a novel generative flow matching modeling framework for text-driven 3D human motion generation. FlowCoMotion integrates a label-latent coupling method for accurate motion representation, ensuring semantic and dynamic properties, and a flow matching generation strategy for fast generation of high-quality motion. However, some limitations remain, such as the need for further optimization in generating longer sequences. Future work will explore zero-shot generalization capabilities.

\bibliographystyle{plain} 
\bibliography{reference}
\newpage
\appendix

 \section{More Results}
\subsection{More Qualitative Results}
\label{more_results}
We supplement some qualitative results of the text-to-motion comparison on HumanML3D, dividing them into shorter and longer prompts, as shown in \Cref{fig:supp} and \Cref{fig:supp2} respectively.

\textbf{For shorter prompts,} MotionGPT generally generates semantically correct motions, but it struggles with direction- and plural-related descriptions (e.g., ``right side'', ``arms'' and ``left/right arm''). While MoMask++ and MotionGPT3 improve in this regard, they still struggle to generate motions that perfectly match the text descriptions. Furthermore, the motions generated by MoMask++ are largely identical and lack dynamism. In contrast, our method is able to generate high-quality motions that are highly consistent with the input text.

\begin{figure*}[htbp]
    \centering
    \includegraphics[width=\linewidth]{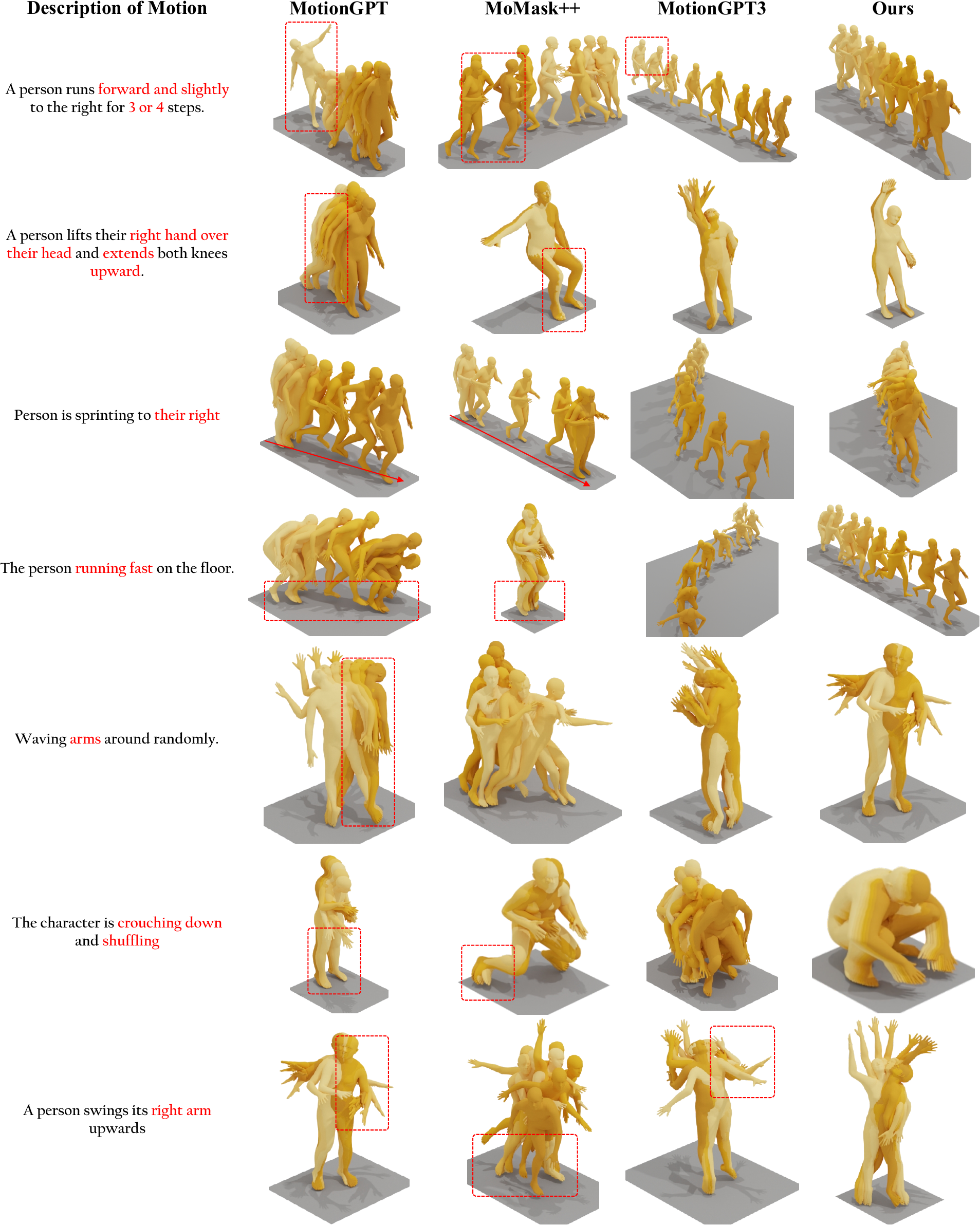}
    \caption{\footnotesize \textbf{Qualitative evaluation on text-to-motion generation.} The state-of-the-art methods presented in this paper are all implemented using the same inference and rendering settings on HumanML3D. Misaligned motion is indicated by red text and bounding boxes. We selected some results with shorter prompts. The results demonstrate that our motion-language framework exhibits strong text understanding capabilities, enabling the generation of more accurate and coherent motion.}
    \label{fig:supp}
\end{figure*}

\textbf{For longer prompts,} while MotionGPT produces semantically plausible basic motions, it struggles significantly with extended descriptions that require a series of distinct actions (e.g., knee down'' $\rightarrow$ raising right arm up'' $\rightarrow$ ``placing its forearm in front of the face''). Although MoMask++ and MotionGPT3 show improved sequence handling, they still fail to achieve precise text-motion alignment, frequently suffering from incorrect action ordering or the insertion of irrelevant movements. Conversely, our approach excels at interpreting complex, multi-action prompts, delivering high-quality motion sequences that strictly adhere to the textual conditions.

\begin{figure*}[htbp]
    \centering
    \includegraphics[width=\linewidth]{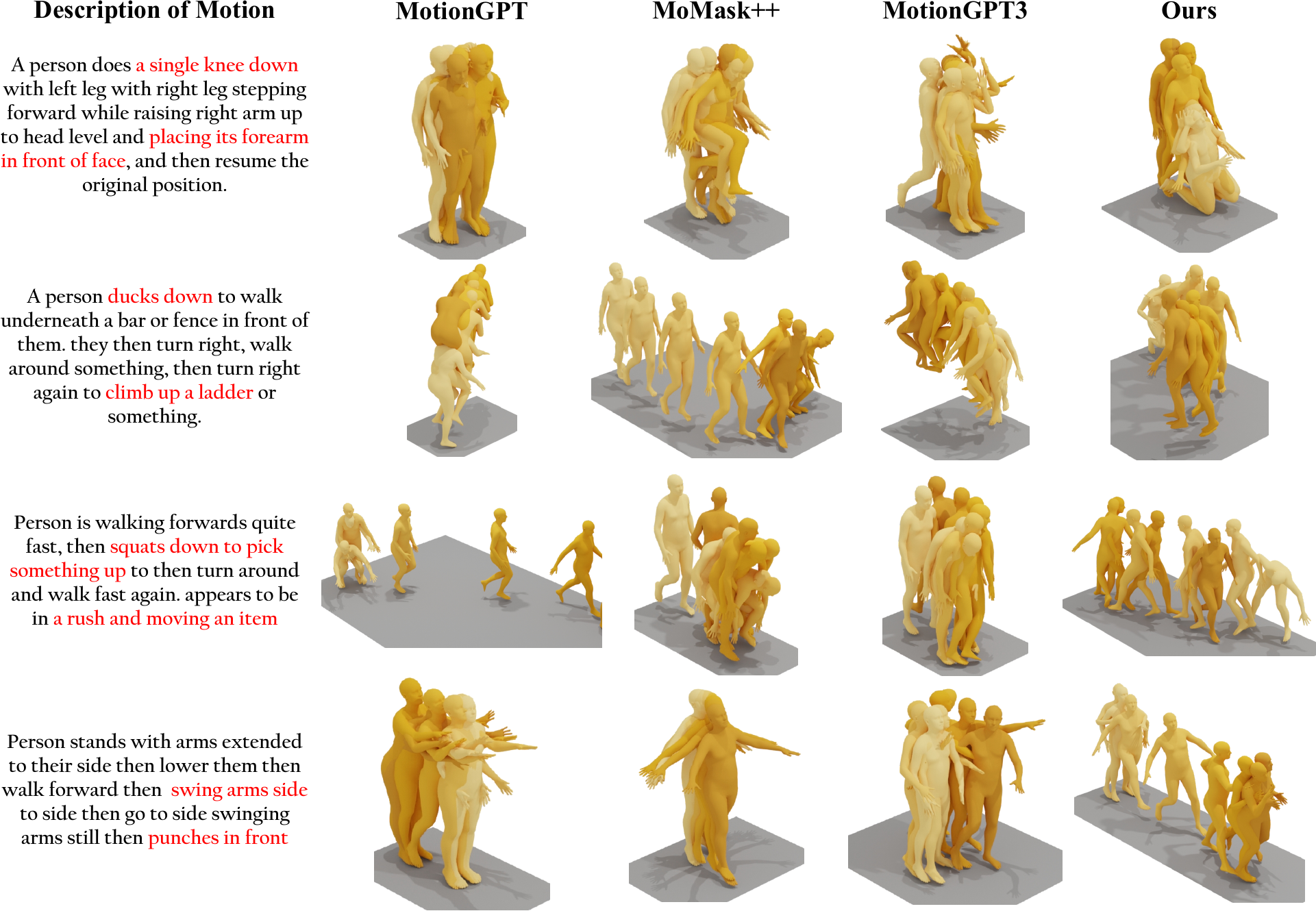}
    \caption{\footnotesize \textbf{Qualitative evaluation on text-to-motion generation.}  The provided state-of-the-art methods are under the same training and inference setting on HumanML3D. Misaligned motions are indicated by red texts. We selected some results with longer prompts. The results indicate that our motion-language framework exhibits strong text comprehension capabilities, leading to more accurate and coherent motion generation.}
    \label{fig:supp2}
\end{figure*}

\subsection{More Quantitative Evaluation}
\label{more_evaluation}
To comprehensively evaluate our method on text-to-motion generation, we report a full comparison with prior T2M approaches in \Cref{table_more}. The main methods include the following: TEMOS \cite{TEMOS}, TM2T \cite{TM2T}, T2M \cite{HumanML3D}, TM2D \cite{TM2D}, MotionDiffuse \cite{MotionDiffuse}, MDM \cite{mdm}, MotionLLM \cite{motionllm}, MLD \cite{MLD}, Motion Mamba \cite{Motion_Mamba}, ReMoDiffuse \cite{ReMoDiffuse}, Fg-T2M \cite{Fg_T2M}, T2M-GPT \cite{T2M-GPT}, EMDM \cite{EMDM}, GraphMotion \cite{GraphMotion}, AttT2M \cite{AttT2M}, GUESS \cite{GUESS}, MMM \cite{MMM}, MotionCLR \cite{motionclr}, MotionGPT \cite{MotionGPT}, MotionGPT-2 \cite{motiongpt2}, Parco \cite{ParCo}, StableMoFusion \cite{StableMoFusion}, DiverseMotion \cite{DiverseMotion}, BAD \cite{bad}, MoMask \cite{Momask}, Momask++ \cite{Snapmogen}, MotionGPT3 \cite{MotionGPT3}. The results show that our method consistently outperforms prior approaches and achieves competitive performance on HumanML3D. 

\begin{table*}[t]
    \caption{\footnotesize\textbf{Quantitative evaluation on \text{HumanML3D} test set.}}
    \centering
    \scalebox{0.75}{
    \begin{tabular}{l c c c c c c c}
    \toprule
     \multirow{2}{*}{Methods}  & \multicolumn{3}{c}{R-precision$\uparrow$} & \multirow{2}{*}{FID$\downarrow$} & \multirow{2}{*}{MultiModal Dist$\downarrow$} & \multirow{2}{*}{Diversity$\rightarrow$} & \multirow{2}{*}{MModality$\uparrow$}\\
    \cline{2-4}
     ~ & Top 1 & Top 2 & Top 3 \\
    \midrule
       Real motions & \et{0.511}{.003} & \et{0.703}{.003} & \et{0.797}{.002} & \et{0.002}{.000} & \et{2.974}{.008} & \et{9.503}{.065} & - \\
    \midrule
    TEMOS~ & \et{0.424}{.002} & \et{0.612}{.002} & \et{0.722}{.002} & \et{3.734}{.028} & \et{3.703}{.008} & \et{8.973}{.071} & \et{0.368}{.018} \\
    TM2T~ & \et{0.424}{.003} & \et{0.618}{.003} & \et{0.729}{.002} & \et{1.501}{.017} & \et{3.467}{.011} & \et{8.589}{.076} & \et{2.424}{.093} \\
    T2M~ & \et{0.457}{.002} & \et{0.639}{.003} & \et{0.740}{.003} & \et{1.067}{.002} & \et{3.340}{.008} & \et{9.188}{.002} & \et{2.090}{.083} \\
    TM2D~ & \et{0.319}{.000} & - & - & \et{1.021}{.000} & \et{4.098}{.000} & \etb{9.513}{.000} & \etb{4.139}{.000} \\
    MotionDiffuse~ & \et{0.491}{.001} & \et{0.681}{.001} & \et{0.782}{.001} & \et{0.630}{.001} & \et{3.113}{.001} & \et{9.410}{.049} & \et{1.553}{.042} \\
    MDM~ & \et{0.320}{.005} & \et{0.498}{.004} & \et{0.611}{.007} & \et{0.544}{.044} & \et{5.566}{.027} & \et{9.559}{.086} & \et{2.799}{.072} \\
    MotionLLM~ & \et{0.482}{.004} & \et{0.672}{.003} & \et{0.770}{.002} & \et{0.491}{.019} & \et{3.138}{.010} & \et{9.838}{.244} & - \\
    MLD~ & \et{0.481}{.003} & \et{0.673}{.003} & \et{0.772}{.002} & \et{0.473}{.013} & \et{3.196}{.010} & \et{9.724}{.082} & \et{2.413}{.079} \\
    Motion Mamba~ & \et{0.502}{.003} & \et{0.693}{.002} & \et{0.792}{.002} & \et{0.281}{.009} & \et{3.060}{.058} & \et{9.871}{.084} & \et{2.294}{.058} \\
    MotionCraft~ & \et{0.501}{.003} & \et{0.697}{.003} & \et{0.796}{.002} & \et{0.173}{.002} & \et{3.025}{.008} & \et{9.543}{.098} & - \\
    ReMoDiffuse~ & \et{0.510}{.005} & \et{0.698}{.006} & \et{0.795}{.004} & \et{0.103}{.004} & \et{2.974}{.016} & \et{9.018}{.075} & \et{1.795}{.043} \\
    Fg-T2M~ & \et{0.492}{.002} & \et{0.683}{.003} & \et{0.783}{.002} & \et{0.243}{.019} & \et{3.109}{.007} & \et{9.278}{.072} & \et{1.614}{.049} \\
    T2M-GPT~ & \et{0.492}{.003} & \et{0.679}{.002} & \et{0.775}{.002} & \et{0.141}{.005} & \et{3.121}{.009} & \et{9.722}{.082} & \et{1.831}{.048} \\
    EMDM~ & \et{0.498}{.007} & \et{0.684}{.006} & \et{0.786}{.006} & \et{0.112}{.019} & \et{3.110}{.027} & \et{9.551}{.078} & \et{1.641}{.078} \\
    GraphMotion~ & \et{0.504}{.003} & \et{0.699}{.002} & \et{0.785}{.002} & \et{0.116}{.007} & \et{3.070}{.008} & \et{9.692}{.067} & \et{2.766}{.096} \\
    AttT2M~ & \et{0.499}{.003} & \et{0.690}{.002} & \et{0.786}{.002} & \et{0.112}{.006} & \et{3.038}{.007} & \et{9.700}{.090} & \et{2.452}{.051} \\
    GUESS~ & \et{0.503}{.003} & \et{0.688}{.002} & \et{0.787}{.002} & \et{0.109}{.007} & \et{3.006}{.007} & \et{9.826}{.104} & \et{2.430}{.100} \\
    MMM~ & \et{0.504}{.003} & \et{0.696}{.003} & \et{0.794}{.002} & \et{0.080}{.003} & \et{2.998}{.007} & \et{9.411}{.058} & \et{1.164}{.041} \\
    MotionCLR~ & \et{0.542}{.001} & \et{0.733}{.002} & \et{0.827}{.003} & \et{0.099}{.003} & \et{2.981}{.011} & - & \et{2.145}{.043} \\
    MotionGPT~ & \et{0.492}{.003} & \et{0.681}{.003} & \et{0.778}{.002} & \et{0.232}{.008} & \et{3.096}{.008} & \et{9.528}{.071} & \et{2.008}{.084} \\
    MotionGPT-2~ & \et{0.496}{.002} & \et{0.691}{.003} & \et{0.782}{.004} & \et{0.191}{.004} & \et{3.080}{.013} & \et{9.860}{.026} & \et{2.137}{.022} \\
    ParCo~ & \et{0.515}{.003} & \et{0.706}{.003} & \et{0.801}{.002} & \et{0.109}{.005} & \et{2.927}{.008} & \et{9.576}{.088} & \et{1.382}{.060} \\
    StableMoFusion~ & \et{0.553}{.003} & \et{0.748}{.002} & \et{0.841}{.002} & \et{0.098}{.003} & - & \et{9.748}{.092} & \et{1.774}{.051} \\
    DiverseMotion~ & \et{0.515}{.003}  & \et{0.706}{.002} & \et{0.802}{.002} & \et{0.072}{.004} & \et{2.941}{.007} & \et{9.683}{.102} & \et{1.869}{.089} \\
    BAD~ & \et{0.517}{.002} & \et{0.713}{.003} & \et{0.808}{.003} & \et{0.065}{.003} & \et{2.901}{.008} & \et{9.694}{.068} & \et{1.194}{.044} \\
    MoMask~ & \et{0.521}{.002} & \et{0.713}{.002} & \et{0.807}{.002} & \etb{0.045}{.002} & \et{2.958}{.008} & \et{9.620}{.064} & \et{1.241}{.04} \\
    MoMask++~ & \et{0.528}{.003} & \et{0.718}{.003} & \et{0.811}{.002} & \et{0.072}{.003} & \et{2.912}{.008} & \et{9.573}{.062} & \et{1.227}{.046} \\
    MotionGPT3~ & \ets{0.543}{.003} & \ets{0.735}{.002} & \ets{0.828}{.002} & \et{0.217}{.010} & \etb{2.793}{.007} & \et{9.662}{.072} & \et{1.366}{.046} \\
    \midrule
    Ours$^\text{nova}$  & \et{0.532}{.002} & \et{0.725}{.002} & \et{0.819}{.002} & \et{0.072}{.003} & \et{2.887}{.006} & \et{9.377}{.078} & \et{1.875}{.088} \\
    Ours$^\text{orbit}$  & \etb{0.550}{.003} & \etb{0.740}{.003} & \etb{0.832}{.002} & \ets{0.061}{.003} & \ets{2.852}{.008} & \ets{9.457}{.067} & \ets{2.071}{.093} \\
    \bottomrule
    \end{tabular}
    }
    \label{table_more}
\end{table*}

\section{More Ablation Studies}
\label{appden:abl}
In this section, we have made additional supplementary information as follows: (1). The impact of the number of ODE sampling steps and the size of the CFG model during the flow matching process on the generated results. (2). We compared the reasoning speed of our framework with that of other competitive frameworks. (3). The influence of the proportions of Tokens and Latents on motion reconstruction.

\begin{figure}[htbp]
    \centering
    \includegraphics[width=0.75\linewidth]{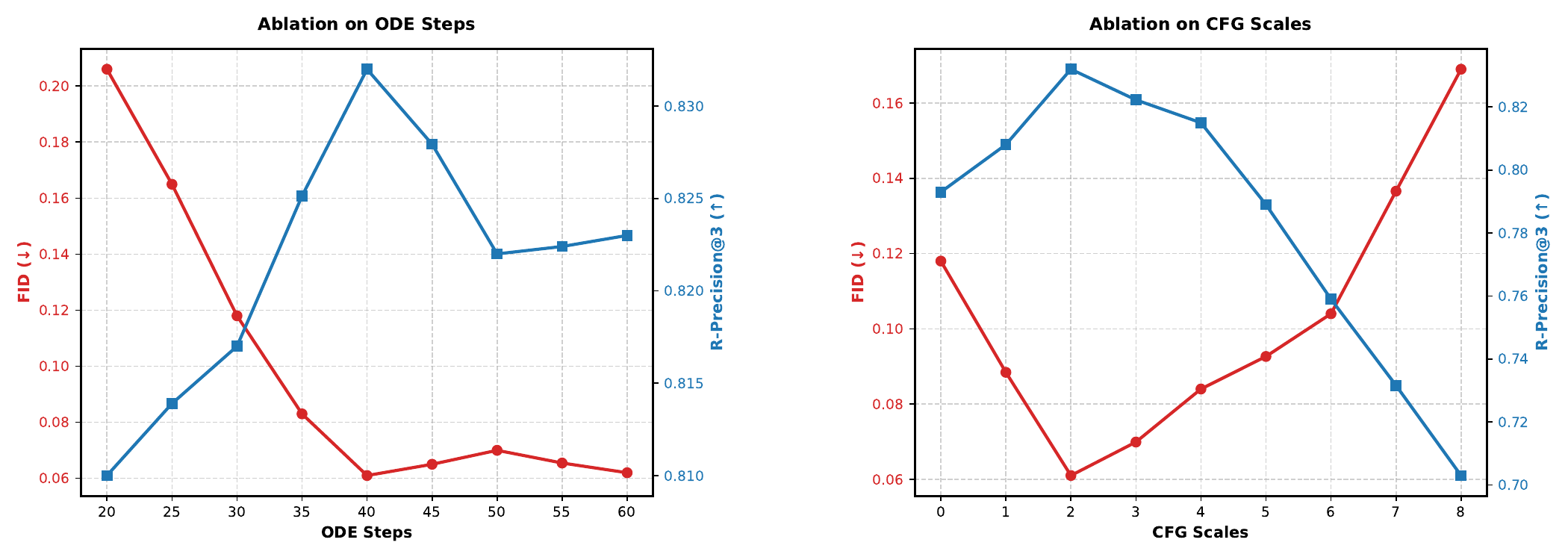}
    \caption{\footnotesize Evaluation sweep over ODE numbers L (left) and guidance scale s (right) in inference. We find a accuracy-fidelity sweep spot around s = 2, meanwhile 40 iterations (L = 40) for flow matching yield sufficiently good results.}
    \label{fig:ode}
\end{figure}

\textbf{Inference Hyperparameters.} During the inference process of flow matching, the scaling coefficient s of CFG and the number of ODE sampling L are two crucial hyperparameters. In \Cref{fig:ode}, we present the performance curves of FID and R-Precision@3 by changing different values of s and L. Several key observations emerge. Firstly, more sampling times do not necessarily lead to better results. As L increases, FID quickly converges to a minimum value, typically within approximately 40 iterations. After more than 40 iterations, there is no further performance improvement in FID, but R-Precision@3 will experience a certain decline. Secondly, the optimal guidance coefficient s for inference is approximately around s = 2. Over-guided decoding may even lead to a decrease in performance. Our FlowCoMotion requires fewer inference steps than most diffusion models.

\begin{figure}[htbp]
    \centering
    \includegraphics[width=0.55\linewidth]{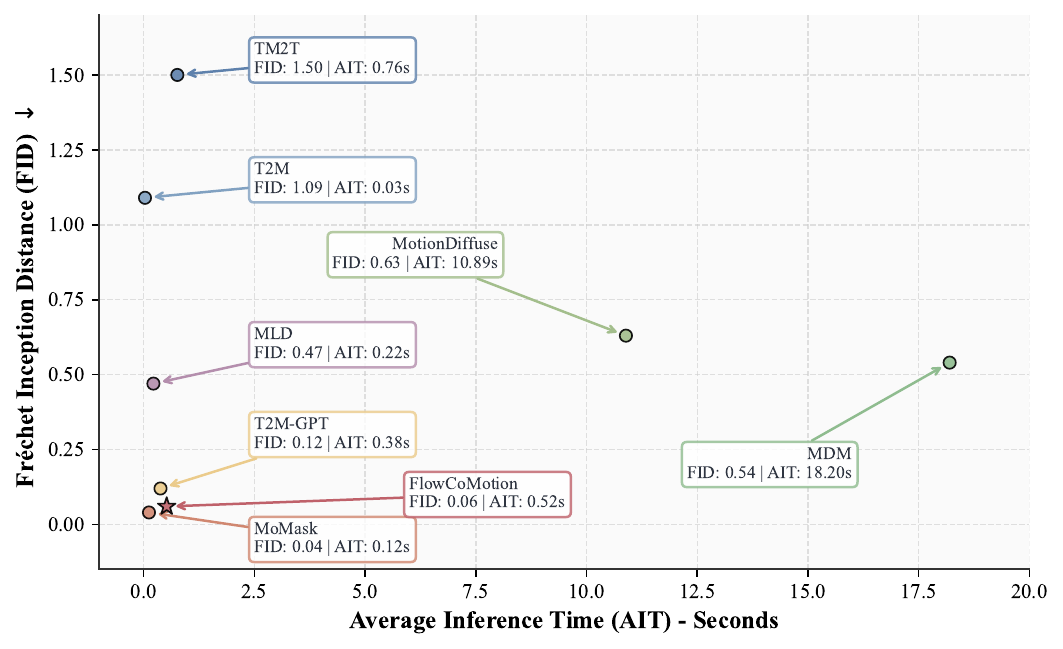}
    \caption{\footnotesize \textbf{Comparison of inference time costs.} All tests were conducted on a single NVIDIA RTX 2080 Ti. Models closer to the origin indicate better performance. For non-autoregressive models, our method significantly improves inference speed, thanks to the flow matching directional velocity field.}
    \label{fig:inference}
\end{figure}

\textbf{Inference Time.} In \Cref{fig:inference}, we compare inference efficiency and quality across methods, measuring runtime as the average latency over 100 samples on an NVIDIA RTX 2080 Ti. FlowCoMotion provides a stronger quality–efficiency trade-off than prior approaches. Specifically, by leveraging a Flow Matching framework rather than traditional diffusion models, our approach significantly accelerates the generation process. The optimal transport trajectories inherent in Flow Matching allow for a drastic reduction in the number of sampling steps required without compromising motion fidelity. Consequently, FlowCoMotion substantially decreases inference latency, achieving generation speeds that closely approach those of fast autoregressive-based methods, while still maintaining superior motion quality.

\begin{figure}[htbp]
    \centering
    \includegraphics[width= 0.55\linewidth]{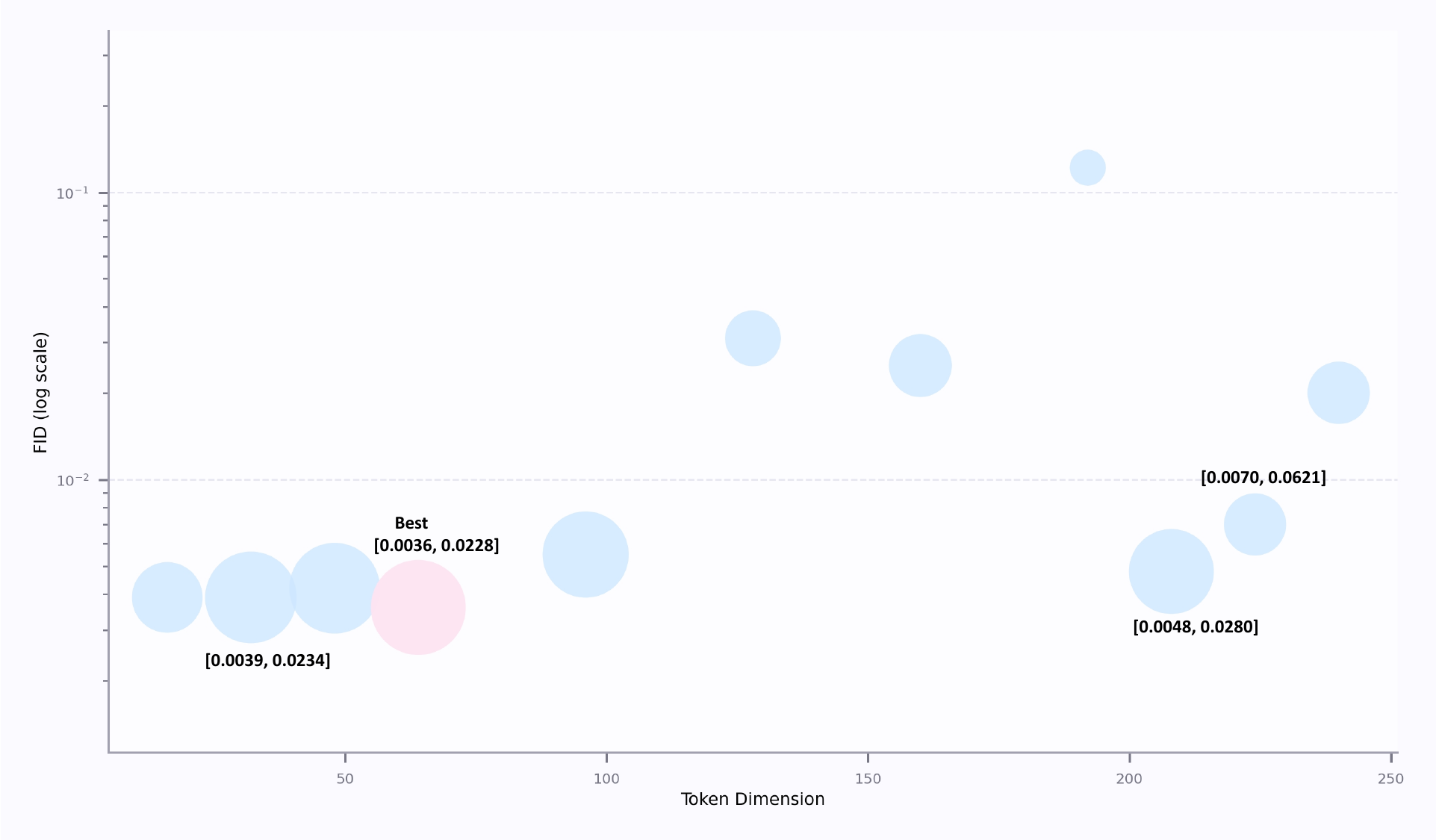}
    \caption{\footnotesize \textbf{Ablation analysis on representation dimension choices.} We report reconstruction quality using FID and MPJPE. The larger of the circles represents a better MPJPE.}
    \label{fig:dim}
\end{figure}

\textbf{Representation Dimension.} We study how allocating capacity between the token and latent branches affects reconstruction quality. Keeping the total dimensionality fixed, we vary the dimensions of both branches and summarize the results in \Cref{fig:dim}. Performance is best when one branch receives most of the capacity while the other retains a non-trivial share; overly skewed allocations lead to clear degradation. This trend supports the complementarity of the two representations: tokens capture semantic structure, whereas continuous latent preserve geometric and temporal detail. A primary–secondary allocation thus provides the strongest hybrid representation under a fixed budget. Conversely, forcing an equal division restricts the representational power of both branches, preventing either from fully modeling its target features. Based on these empirical observations, we adopt this optimal asymmetric allocation as our default configuration for all subsequent experiments.

\section{Human Evaluation on Text-to-Motion Quality}
This evaluation presents a comprehensive user evaluation of our motion generation framework FlowCoMotion by conducting a Google Forms survey as shown in \Cref{fig:user_study_gui}. Participants were asked to evaluate the effectiveness and realism of the generated motions based on accuracy, smoothness, realism, and overall user experience.

\textbf{Evaluation Setup.} 1. For the individual evaluation, we provided 44 distinct motion sequences. To mitigate potential biases introduced by user selection inertia, 6 additional sequences were randomly inserted into the set but excluded from the final analysis. Participants rated these motions on a three-point scale (scored from 1 to 3), and we report the final average rating. 2. For the comparative evaluation, we curated 50 sets of motion sequences generated by different models based on identical textual descriptions. Participants were asked to select the best-performing model within each set. This allowed for a direct assessment of our model's realism, coherence, and engagement against existing alternatives. We report the final average win rate (out of 50) based on how often users felt our method achieved the best results. 

\begin{figure*}[htbp]
    \centering
    \includegraphics[width=\linewidth]{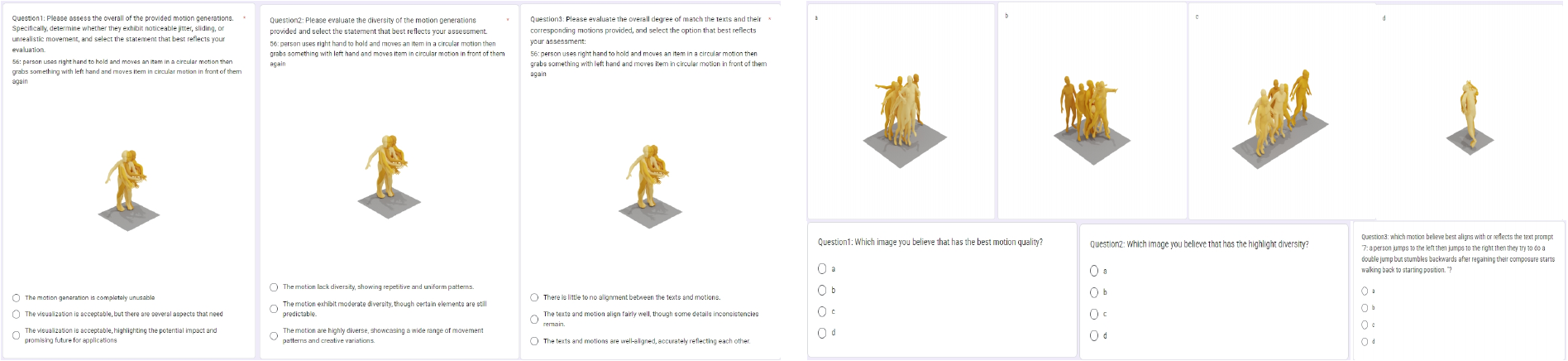}
    \caption{\footnotesize \textbf{User study form.} The User Interface (UI) was used in our user study. Our user study consists of 50 individual evaluations and 50 comparative evaluations. We incorporate elements from other methods to prevent the review results from becoming invalid due to user inertia.}
    \label{fig:user_study_gui}
\end{figure*}

\textbf{Baseline.} We selected three equally competitive models: MotionGPT, Momask++, and MotionGPT3. We evaluated them using high-quality rendered sequence images and videos.

\textbf{Human Evaluation Results Analysis.} As illustrated in \Cref{fig:user_study_results}, FlowCoMotion demonstrates clear superiority over baselines including MoMask++ , MotionGPT3, and MotionGPT. In the individual evaluation, our method achieved high satisfaction rates of 86.0\% , 88.0\% , and 89.0\% for Motion Quality , Motion Diversity , and Text-Motion Alignment, respectively. Furthermore, the comparative evaluation reveals a strong human preference for FlowCoMotion, which was selected as the top-performing model by 44.8\% , 48.8\% , and 52.3\%  of participants across the three respective metrics. These findings confirm that our model generates more realistic, diverse, and well-aligned motions than existing approaches.

\begin{figure*}[htbp]
    \centering
    \includegraphics[width=\linewidth]{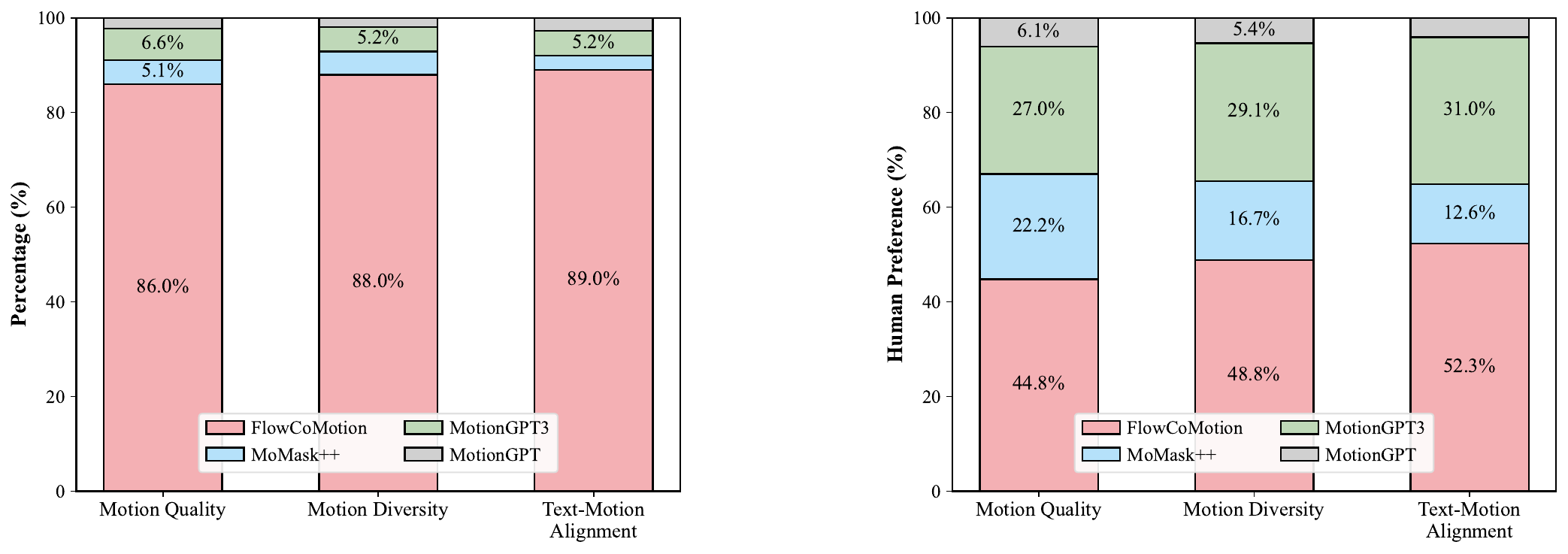}
    \caption{\footnotesize Human preference on Text-to-Motion Generation by comparing FlowCoMotion with MotionGPT \cite{MotionGPT}, Momask++ \cite{Snapmogen} and MotionGPT3 \cite{MotionGPT3}. We report the scores on three generation aspects.}
    \label{fig:user_study_results}
\end{figure*}

\section{Why Token-Latent Coupling Helps: A Theoretical View}
\label{sec:theory_coupling}
Let $X$ represent a motion sequence, modeled as a vector in $\mathbb R^{T\times d}$, where $T$ denotes the sequence length and $d$ is the dimension of each representation. Additionally, let $C$ be the textual condition that drives the motion generation process.

Our framework constructs two distinct representations from $X$: (1). A discrete token representation $Z_{q}$, derived from the Residual Vector Quantization tokenizer. (2). A continuous latent representation $Z_{c}$, obtained from a VAE's encoder.

To integrate both modalities, we first extract a dequantized multi-scale token feature $\hat{z}_{q}=\sum_{v=0}^V\mathcal{I}(\hat{f}^v,h^v)$, where $\hat{f}^v$ represents the token features at scale $v$, and $h^v$ denotes the associated multi-scale feature for each token. We can combine $\hat z_{q}$ with $\hat z_{c}$ using a coupling network, producing a fused representation $\mathcal C$ defined as:
\begin{equation}
\hat{\mathbf{z}}_{gt}=\mathcal C(\hat{z}_{q}, \hat z_{c}) \quad \hat{\mathbf{m}}=\mathcal D(\hat{\mathbf{z}}_{gt})
\end{equation}
In Stage 3, we define the flow matching data endpoint as $z_{1}=z_{gt}$, and train a condition velocity field to map $z_{0} \sim \mathcal N(0, I) $ to $z_{1}$, where $z_{0}$ is drawn from a standard Gaussian distribution.

\subsection{Coupling improves the best achievable reconstruction risk.}
We quantify the utility of token--latent coupling through the population Bayes risk under squared reconstruction loss. For any representation $S$ of $X$, define the optimal reconstruction risk
\begin{equation}
\mathcal{R}^\star(S)\triangleq \inf_{g}\; \mathbb{E}\!\left[\|X-g(S)\|_2^2\right],
\end{equation}
where $g$ ranges over all measurable decoders.

\begin{proposition}[Bayes risk monotonicity and a joint-representation limit]
\label{prop:bayes_monotone}
Let $S_1,S_2$ be two representations of $X$. If $\sigma(S_{1}) \subseteq \sigma(S_{2})$ (i.e., $S_2$ is at least as informative as $S_1$), then
\begin{equation}
\mathcal{R}^\star(S_2)\le \mathcal{R}^\star(S_1).
\end{equation}
In particular, for the joint-representation $S_{\text{joint}}\triangleq (\hat z_c,\hat z_q)$, we have
\begin{equation}
\mathcal{R}^\star(\hat z_c,\hat z_q)\;\le\;\min\Big\{\mathcal{R}^\star(\hat z_c),\; \mathcal{R}^\star(\hat z_q)\Big\}.
\label{eq:joint_advantage}
\end{equation}
Moreover, the inequality in \Cref{eq:joint_advantage} is typically strict whenever the two branches are complementary, i.e., one branch reduces the conditional reconstruction uncertainty left by the other on a set of nonzero probability mass.
\end{proposition}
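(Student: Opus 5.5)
The plan is to identify the Bayes-optimal decoder explicitly and then reduce the comparison of risks to a comparison of conditional expectations. We assume $\mathbb{E}\|X\|_2^2<\infty$ throughout, so that every quantity below is finite. Under this assumption, for any representation $S$ the infimum defining $\mathcal{R}^\star(S)$ is attained by $g^\star(S)=\mathbb{E}[X\mid S]$. This is the $L^2$ projection onto the closed subspace of $\sigma(S)$-measurable square-integrable functions, by Doob–Dynkin every measurable $g(S)$ lies in that subspace, and the Pythagorean identity gives
\begin{equation}
\mathbb{E}\|X-g(S)\|_2^2=\mathbb{E}\|X-\mathbb{E}[X\mid S]\|_2^2+\mathbb{E}\|\mathbb{E}[X\mid S]-g(S)\|_2^2 .
\end{equation}
Hence $\mathcal{R}^\star(S)=\mathbb{E}\,\mathrm{tr}\,\mathrm{Cov}(X\mid S)$.

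For the monotonicity claim, suppose $\sigma(S_1)\subseteq\sigma(S_2)$. Then any decoder $g_1(S_1)$ is $\sigma(S_2)$-measurable, so again by Doob–Dynkin it can be written as $\tilde g(S_2)$ for some measurable $\tilde g$. The feasible set for $S_1$ is therefore contained in the feasible set for $S_2$, and taking infima gives $\mathcal{R}^\star(S_2)\le\mathcal{R}^\star(S_1)$. The same conclusion can be read off the projection picture: projecting onto a larger subspace leaves a smaller residual.

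For the joint bound \eqref{eq:joint_advantage}, note that $\sigma(\hat z_c)\subseteq\sigma(\hat z_c,\hat z_q)$ and $\sigma(\hat z_q)\subseteq\sigma(\hat z_c,\hat z_q)$, since each component is a measurable coordinate projection of the pair. Applying the monotonicity result twice and taking the minimum of the two bounds gives the claim.

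The strictness statement is the delicate step, because "typically strict" must first be given a precise meaning. I would formalize complementarity as the event
\begin{equation}
A=\bigl\{\mathbb{E}[X\mid \hat z_c,\hat z_q]\neq \mathbb{E}[X\mid \hat z_c]\bigr\},\qquad \mathbb{P}(A)>0,
\end{equation}
together with the symmetric condition in which the roles of the branches are swapped. Applying the tower property and the Pythagorean identity to the nested $\sigma$-algebras gives
\begin{equation}
\mathcal{R}^\star(\hat z_c)-\mathcal{R}^\star(\hat z_c,\hat z_q)=\mathbb{E}\bigl\|\mathbb{E}[X\mid \hat z_c,\hat z_q]-\mathbb{E}[X\mid \hat z_c]\bigr\|_2^2 ,
\end{equation}
which is strictly positive exactly when $\mathbb{P}(A)>0$. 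The same computation applies with $\hat z_q$ in place of $\hat z_c$. When both conditions hold, the joint risk is strictly below the minimum.

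The obstacle I expect is reconciling the proposition's informal phrase, "reduces the conditional reconstruction uncertainty on a set of nonzero probability mass", with this criterion. Reduced conditional variance on a positive-mass set is equivalent to a nonzero gap in conditional means, by the identity $\mathbb{E}[\mathrm{tr}\,\mathrm{Cov}(X\mid S_1)\mid S_1]-\mathbb{E}[\mathrm{tr}\,\mathrm{Cov}(X\mid S_2)\mid S_1]=\mathbb{E}[\|\mathbb{E}[X\mid S_2]-\mathbb{E}[X\mid S_1]\|^2\mid S_1]$ for $\sigma(S_1)\subseteq\sigma(S_2)$. I would therefore state the criterion in this mean-gap form explicitly.
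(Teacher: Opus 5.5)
Your proof is correct, and at the key monotonicity step it is cleaner than the paper's. The paper also identifies $g^\star(S)=\mathbb{E}[X\mid S]$ and writes $\mathcal{R}^\star(S)=\mathbb{E}[\mathrm{Var}(X\mid S)]$. It then gets monotonicity by asserting $\mathrm{Var}(X\mid S_2)\le\mathrm{Var}(X\mid S_1)$ almost surely, which is false in general. Only the averaged statement $\mathbb{E}[\mathrm{Var}(X\mid S_2)\mid S_1]\le\mathrm{Var}(X\mid S_1)$ holds, and that suffices once expectations are taken.

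Your feasible-set argument avoids this issue entirely. Every $g_1(S_1)$ is $\sigma(S_2)$-measurable, hence equals some $\tilde g(S_2)$ by Doob--Dynkin, so the infimum for $S_2$ runs over a larger class. This needs no conditional variances and does not even use the form of $g^\star$.

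The paper gives no argument for the strictness clause. Your nested-projection identity turns ``typically strict'' into an exact criterion, and two of its features are necessary rather than cosmetic.

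First, you need the mean gap in both directions. Suppose only $\hat z_q$ adds information beyond $\hat z_c$, for example $\hat z_c$ constant and $\hat z_q=X$. Then the joint risk is strictly below $\mathcal{R}^\star(\hat z_c)$ but equals $\mathcal{R}^\star(\hat z_q)$. So a literally one-sided ``one branch reduces the uncertainty left by the other'' does not give strictness against the minimum.

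Second, the reduction must be in your mean-gap sense, equivalently the $\hat z_c$-conditionally averaged sense. Take $\hat z_c$ constant and $\hat z_q=Y$ uniform on $\{0,1\}$. Let $X$ be uniform on $\{\pm1\}$ given $Y=0$ and uniform on $\{\pm3\}$ given $Y=1$. Then $\mathrm{Var}(X\mid Y=0)=1<5=\mathrm{Var}(X)$ on a set of mass $1/2$, yet $\mathbb{E}[X\mid Y]=0$ and both risks equal $5$. The same example, via $\mathrm{Var}(X\mid Y=1)=9>5$, also refutes the paper's almost-sure claim. So in your last paragraph, state explicitly that ``reduced conditional variance on a positive-mass set'' means the conditional average in your identity, not a pointwise decrease.

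A minor point: not every measurable $g(S)$ lies in $L^2$. Those that do not have infinite risk because $X\in L^2$, so your Pythagorean step is unaffected.
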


\begin{proof}
Under squared loss, the Bayes-optimal decoder is the conditional expectation $g^\star(S)=\mathbb{E}[X\mid S]$. Hence
\begin{equation}
\mathcal{R}^\star(S)
=\mathbb{E}\!\left[\|X-\mathbb{E}[X\mid S]\|_2^2\right]
=\mathbb{E}\!\left[\mathrm{Var}(X\mid S)\right].
\end{equation}
If $\sigma(S_1)\subseteq\sigma(S_2)$, conditioning on $S_2$ cannot increase conditional variance, implying $\mathrm{Var}(X\mid S_2)\le \mathrm{Var}(X\mid S_1)$ almost surely, and thus $\mathcal{R}^\star(S_2)\le \mathcal{R}^\star(S_1)$. Applying this with $S_2=(\hat z_c,\hat z_q)$ and $S_1=\hat z_c$ (or $S_1=\hat z_q$) yields \Cref{eq:joint_advantage}.
\end{proof}

\Cref{prop:bayes_monotone} should be read as a representational \emph{limit statement}: the joint statistic $(\hat z_c,\hat z_q)$ admits a Bayes-optimal reconstruction risk no worse than either branch alone. Our coupling network $\hat z_{gt}=\mathcal{C}(\hat z_q,\hat z_c)$ can be viewed as a learnable \emph{compression} of this joint statistic. In Stage 1 and 2, we train $\mathcal{C}$ and the decoder $\mathcal{D}$ to reconstruct $X$ from $\hat z_{gt}$, which encourages $\hat z_{gt}$ to preserve the reconstruction-relevant information in $(\hat z_c,\hat z_q)$. Consequently, as the capacity of $\mathcal{C}$ and $\mathcal{D}$ increases and optimization succeeds, the achievable reconstruction risk using $\hat z_{gt}$ can approach the joint-representation limit in \Cref{eq:joint_advantage}.


\subsection{Coupling Reduces the Irreducible Error of Flow Matching.}
We next connect coupling to conditional flow matching. Let $z_1$ denote the target endpoint latent (in our method, $z_1=\hat z_{gt}$), and draw $z_0\sim \mathcal{N}(0,I)$. Consider the standard linear interpolation path
\begin{equation}
z_t \;=\; (1-t)z_0+t z_1,\qquad u_t \;\triangleq\; \frac{d z_t}{dt} \;=\; z_1-z_0,\qquad t\sim \mathcal{U}(0,1).
\end{equation}
The population conditional flow matching objective is
\begin{equation}
\mathcal{L}_{\mathrm{FM}}(v)\triangleq \mathbb{E}\!\left[\|u_t - v(z_t,t,C)\|_2^2\right],
\end{equation}
where the expectation is over \((t,z_0,z_1,C)\). By the optimality of \(L_2\) regression,
\begin{equation}
v^\star(z_t,t,C)=\mathbb{E}[u_t\mid z_t,t,C],
\end{equation}
and the minimum achievable loss equals the irreducible conditional variance
\begin{equation}
\mathcal{L}_{\mathrm{FM}}^\star(z_1)
\;\triangleq\; \inf_v \mathcal{L}_{\mathrm{FM}}(v)
\;=\; \mathbb{E}\!\left[\mathrm{Var}(u_t\mid z_t,t,C)\right].
\label{eq:fm_irreducible}
\end{equation}

We compare two endpoint constructions under the same text condition  $C$: a coupled endpoint $z_1^{\mathrm{cp}}$ (e.g., $z_1^{\mathrm{cp}}=\hat z_{gt}=\mathcal{C}(\hat z_q,\hat z_c)$) and a single-branch endpoint $z_1^{\mathrm{sb}}$. These two choices induce different random variables $z_t^{\mathrm{cp}}=(1-t)z_0+t z_1^{\mathrm{cp}}$ and $z_t^{\mathrm{sb}}=(1-t)z_0+t z_1^{\mathrm{sb}}$, and their irreducible errors are compared through their respective conditionings. In addition, throughout Stage 1 and 2 we regularize the latent scales (e.g., VAE KL regularization), so $z_1^{\mathrm{cp}}$ and $z_1^{\mathrm{sb}}$ are comparable in magnitude; hence the variance change discussed below is attributed to reduced conditional ambiguity (mode dispersion) rather than trivial rescaling.

\begin{theorem}[Reduction of an irreducible error upper bound via endpoint concentration]
\label{thm:fm_bound_decrease}
Assume that coupling makes the endpoint distribution more concentrated under the text condition, in the sense that
\begin{equation}
\mathbb{E}\!\left[\|z_1^{\mathrm{cp}}-\mathbb{E}[z_1^{\mathrm{cp}}\mid C]\|_2^2\right]
\;\le\;
\mathbb{E}\!\left[\|z_1^{\mathrm{sb}}-\mathbb{E}[z_1^{\mathrm{sb}}\mid C]\|_2^2\right].
\label{eq:endpoint_trace}
\end{equation}
Then, for any fixed $t\in(0,1)$, the irreducible conditional variance term admits the bound
\begin{equation}
\mathbb{E}\!\left[\mathrm{Var}(u_t^{\mathrm{cp}}\mid z_t^{\mathrm{cp}},t,C)\right]
\;\le\; \frac{1}{(1-t)^2}\;
\mathbb{E}\!\left[\|z_1^{\mathrm{cp}}-\mathbb{E}[z_1^{\mathrm{cp}}\mid C]\|_2^2\right],
\label{eq:cp_bound}
\end{equation}
and analogously
\begin{equation}
\mathbb{E}\!\left[\mathrm{Var}(u_t^{\mathrm{sb}}\mid z_t^{\mathrm{sb}},t,C)\right]
\;\le\; \frac{1}{(1-t)^2}\;
\mathbb{E}\!\left[\|z_1^{\mathrm{sb}}-\mathbb{E}[z_1^{\mathrm{sb}}\mid C]\|_2^2\right].
\label{eq:sb_bound}
\end{equation}
Consequently, under \Cref{eq:endpoint_trace}, coupling reduces an \emph{upper bound} on the irreducible variance term in \Cref{eq:fm_irreducible}. (The factor $\tfrac{1}{(1-t)^2}$ diverges as $t\to 1$, which is expected: near the endpoint, the uncertainty of the velocity field is highly sensitive to the endpoint distribution.)
\end{theorem}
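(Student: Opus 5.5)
The plan rests on one algebraic identity: on the linear path, the velocity target is a deterministic affine function of $z_1$ once $z_t$ and $t$ are fixed. The bound then follows from the same "conditioning on more information cannot increase expected conditional variance" principle used in the proof of \Cref{prop:bayes_monotone}.

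Throughout, for a random vector $Y$ and a $\sigma$-algebra $\mathcal G$, I read $\mathrm{Var}(Y\mid\mathcal G)$ in the trace sense, $\mathbb{E}[\|Y-\mathbb{E}[Y\mid\mathcal G]\|_2^2\mid\mathcal G]$. This is the only reading consistent with the squared-norm objective $\mathcal{L}_{\mathrm{FM}}$ and with the right-hand side of \Cref{eq:endpoint_trace}. I fix $t\in(0,1)$ and write the argument once for a generic endpoint $z_1$. It then specialises verbatim to $z_1^{\mathrm{cp}}$, giving \Cref{eq:cp_bound}, and to $z_1^{\mathrm{sb}}$, giving \Cref{eq:sb_bound}.

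\emph{Step 1 (change of variables).} From $z_t=(1-t)z_0+tz_1$ and $t<1$ we get $z_0=(z_t-tz_1)/(1-t)$. Substituting into $u_t=z_1-z_0$ gives
\[
u_t=\frac{z_1-z_t}{1-t}.
\]
Conditional on $(z_t,t,C)$, the term $z_t/(1-t)$ is measurable and only shifts the mean. Conditional variance is invariant under such shifts and scales quadratically, so
\[
\mathrm{Var}(u_t\mid z_t,t,C)=\frac{1}{(1-t)^2}\,\mathrm{Var}(z_1\mid z_t,t,C)\quad\text{a.s.}
\]
This is exactly where the factor $1/(1-t)^2$ comes from.

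\emph{Step 2 (coarsening the conditioning).} For fixed $t$ we have $\sigma(C)\subseteq\sigma(z_t,C)$. By the law of total variance, equivalently the $L^2$-projection argument already used for \Cref{prop:bayes_monotone}, $\mathbb{E}[\mathrm{Var}(z_1\mid z_t,C)]\le\mathbb{E}[\mathrm{Var}(z_1\mid C)]$. Concretely, $\mathbb{E}[z_1\mid z_t,C]$ is the best $\sigma(z_t,C)$-measurable $L^2$ predictor of $z_1$, while $\mathbb{E}[z_1\mid C]$ is merely one admissible such predictor. The right-hand side equals $\mathbb{E}\|z_1-\mathbb{E}[z_1\mid C]\|_2^2$ by the tower property. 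Taking expectations in Step 1 and combining yields \Cref{eq:cp_bound} and \Cref{eq:sb_bound}.

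\emph{Step 3 (conclusion).} Under \Cref{eq:endpoint_trace}, the right-hand side of \Cref{eq:cp_bound} is at most that of \Cref{eq:sb_bound}. Hence the upper bound on the irreducible term in \Cref{eq:fm_irreducible} is no larger for the coupled endpoint, which is the claimed consequence. No independence between $z_0$ and $(z_1,C)$ or Gaussianity is actually needed, only square-integrability of $z_1$.

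I expect the main obstacle to be Step 2 rather than Step 1. One has to state precisely which variance notion is used (trace, not matrix order) and check integrability, so that the projection inequality applies componentwise and sums. A further subtlety is that the bound compares \emph{upper bounds} only: the theorem makes no claim that the irreducible variances themselves are ordered, and I would say so explicitly rather than attempt a stronger statement.
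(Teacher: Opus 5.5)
Your proposal is correct and follows the same route as the paper. You rewrite $u_t=(z_1-z_t)/(1-t)$ to obtain $\mathrm{Var}(u_t\mid z_t,t,C)=(1-t)^{-2}\mathrm{Var}(z_1\mid z_t,t,C)$, then use the law of total variance to coarsen the conditioning from $(z_t,C)$ to $C$ and transfer \Cref{eq:endpoint_trace} to the bounds. Your Step 2 is cleaner than the paper's displayed chain: the paper labels $\mathbb{E}\|z_1-\mathbb{E}[z_1\mid C]\|_2^2-\mathbb{E}\|\mathbb{E}[z_1\mid z_t,t,C]-\mathbb{E}[z_1\mid C]\|_2^2$ as $\mathbb{E}[\mathrm{Var}(z_1\mid C)]$, but that expression actually equals $\mathbb{E}[\mathrm{Var}(z_1\mid z_t,t,C)]$.
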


\begin{proof}
Fix $t\in(0,1)$ and condition on $(t,C)$. Since $z_t=(1-t)z_0+t z_1$ and $u_t=z_1-z_0$, we eliminate $z_0$ via
\begin{equation}
z_0=z_1-u_t
\quad\Longrightarrow\quad
z_t=(1-t)(z_1-u_t)+t z_1=z_1-(1-t)u_t,
\end{equation}
equivalently $z_1=z_t+(1-t)u_t$. Conditioning on $(z_t,t,C)$ yields
\begin{equation}
\mathrm{Var}(z_1\mid z_t,t,C)=(1-t)^2\,\mathrm{Var}(u_t\mid z_t,t,C),
\end{equation}
and hence
\begin{equation}
\mathbb{E}\!\left[\mathrm{Var}(u_t\mid z_t,t,C)\right]
=\frac{1}{(1-t)^2}\,\mathbb{E}\!\left[\mathrm{Var}(z_1\mid z_t,t,C)\right].
\end{equation}
By the law of total variance,
\begin{equation}
\begin{aligned}
\mathbb{E}\!\left[\mathrm{Var}(z_1\mid z_t,t,C)\right]
&\le \mathbb{E}\!\left[\mathrm{Var}(z_1\mid C)\right] \\
&= \mathbb{E}\!\left[\|z_1-\mathbb{E}[z_1\mid C]\|_2^2\right]
-\mathbb{E}\!\left[\|\mathbb{E}[z_1\mid z_t,t,C]-\mathbb{E}[z_1\mid C]\|_2^2\right] \\
&\le \mathbb{E}\!\left[\|z_1-\mathbb{E}[z_1\mid C]\|_2^2\right].
\end{aligned}
\end{equation}
Combining the above gives the generic bound
\begin{equation}
\mathbb{E}\!\left[\mathrm{Var}(u_t\mid z_t,t,C)\right]
\le \frac{1}{(1-t)^2}\;
\mathbb{E}\!\left[\|z_1-\mathbb{E}[z_1\mid C]\|_2^2\right],
\end{equation}
which, applied to $z_1^{\mathrm{cp}}$ and $z_1^{\mathrm{sb}}$, yields \Cref{eq:cp_bound,eq:sb_bound}. Finally, \Cref{eq:endpoint_trace} transfers the improvement to the corresponding upper bound.
\end{proof}

\Cref{eq:fm_irreducible} shows that conditional flow matching is fundamentally limited by the conditional dispersion of the target velocity $u_t$ given $(z_t,t,C)$. In text-to-motion, the mapping $C\mapsto X$ is typically one-to-many, so a single-branch endpoint may aggregate multiple semantic modes under the same caption, increasing the variability of $z_1$ and injecting additional uncertainty into the regression target $u_t$. Coupling injects discrete semantic anchors into the endpoint construction, which can make $p(z_1\mid C)$ more concentrated in the sense of \Cref{eq:endpoint_trace}, thereby reducing an upper bound on the irreducible variance term in \Cref{eq:fm_irreducible} and easing the conditional regression task learned by $v_\theta$.

\subsection{Discussion on assumptions and how our design satisfies them.}
The above analysis relies on two practical assumptions. First, the coupled representation should be sufficiently \emph{information-preserving} with respect to the two branches, so that fusing $(\hat z_q,\hat z_c)$ into $\hat z_{gt}$ does not discard reconstruction-relevant cues. In our design, this is supported by an expressive coupling network $\mathcal{C}$ (e.g., residual MLP/Transformer blocks) and the reconstruction-driven supervision in Stage 1 and 2, where $\mathcal{C}$ and $\mathcal{D}$ are optimized to minimize $\mathbb{E}\|X-\mathcal{D}(\mathcal{C}(\hat z_q,\hat z_c))\|_2^2$. This training objective encourages $\hat z_{gt}$ to behave as a faithful compression of the joint statistic, aligning with the representational limit in \Cref{eq:joint_advantage}.

Second, the variance-bound improvement in \Cref{thm:fm_bound_decrease} presumes that coupling makes the endpoint distribution $p(z_1\mid C)$ more concentrated in a scale-comparable space. Our implementation promotes this comparability through latent regularization and normalization (e.g., the VAE prior/KL regularization for $\hat z_c$ and codebook/embedding normalization for the RVQ branch), which prevents the bound reduction from being explained by trivial rescaling. Moreover, the RVQ commitment loss stabilizes token assignments across time scales, reducing semantic drift of $\hat z_q$ and improving the predictability between discrete anchors and continuous motion dynamics. Empirically, this reduces conditional ambiguity of the endpoint under the same caption, making the conditional regression target in flow matching less noisy and easier to learn with finite-capacity vector fields.

\section{Details on Our Work}
\label{More_details}
\subsection{Training Objectives}

\textbf{Reconstruction Loss.} As the primary objective, this loss ensures the accurate recovery of motion sequences by minimizing the $\texttt{L2}$ distance between predicted ($\hat{X}$) and ground truth ($X$) motions. Its weight is fixed at $\lambda_{rec} = 1.0$ across both stages:
\begin{equation}
L_{rec} = \mathbb{E}[ \| X - \hat{X} \|_2^2 ]
\end{equation}

\textbf{Commitment Loss.} To stabilize the quantization process, the commitment loss ($\lambda_{commit} = 0.01$) minimizes the distance between the encoder's continuous latent output and its nearest codebook entry. Coupled with temporal resolution sampling, this guarantees that the model effectively samples semantic anchors at varying time intervals without needing explicit formulation here.

\textbf{Forward Kinematics (FK) Loss.} Acting as a structural regularization term to ensure physically plausible motions, the FK loss compares predicted and expected joint positions:
\begin{equation}
L_{fk} = \mathbb{E}[ \| X_{\text{pred}} - X_{\text{FK}} \|_2^2 ]
\end{equation}
We set $\lambda_{fk} = 0.5$ in Stage 1 and reduce it to $0.1$ in Stage 2.

\textbf{Dynamic Distillation Loss.} In Stage 1, a multi-view teacher-student alignment regularizes the latent space. The weight $\lambda_{dis}$ linearly warms up to $0.02$, holds constant, then decays to $0.0$. 
\emph{View generation.} Sequences yield two global views and four local views (random crops $\in [0.3, 0.7]$). Local views receive temporal augmentations: time-warp, Gaussian noise, and temporal masking.
\emph{Aggregation.} A single per-view representation is obtained via temporal masked mean pooling:
\begin{equation}
\phi(F,M)=\frac{\sum_{t=1}^{T} M_t\,F_t}{\sum_{t=1}^{T} M_t+\varepsilon}
\end{equation}
\emph{Temperature and centering.} We use separate temperatures ($\tau_{\text{teacher}}=0.04$, $\tau_{\text{student}}=0.07$) and center teacher logits via an EMA buffer. The $\texttt{L2}$ distillation objective minimizes the error between student outputs and teacher targets.

\textbf{Velocity, Acceleration, and Jerk Losses.} Applied in Stage 2 to enhance temporal consistency, these losses penalize deviations in the first three derivatives of motion: velocity ($\lambda_{vel} = 0.2$), acceleration ($\lambda_{acc} = 0.05$), and jerk ($\lambda_{jerk} = 0.01$). Balancing these higher-order kinematic constraints ensures smooth, natural transitions and overall generation stability.

\subsection{Codebook Size and Resolution Segmentation}
The codebook size and resolution segmentation significantly impact quantization and generation quality. Our model fixes the codebook size at $1024$ with an entry dimension of $64$. Combined with a continuous latent dimension of $192$, the total latent dimension per sample is $256$.To optimize quantization, we adapt the resolution segmentation per dataset. For HumanML3D, we apply a four-layer scale $\mathcal{S}_h = [8, 4, 2, 1]$. In SnapMoGen, we utilize a two-layer scale $\mathcal{S}_h = [2, 1]$. Through temporal resolution sampling, the Token Branch extracts semantic anchors at varying intervals to enhance fidelity. Combining these varying scales with the commitment loss enables the model to capture multiple levels of motion granularity, producing detailed and coherent sequences across diverse datasets.

\subsection{Flow Matching Details}
In our implementation, we define a continuous path between a prior latent variable $z_0 \sim \mathcal{N}(0, I)$ and the real data latent $z_1$, leveraging a velocity field predicted by the flow model.

\textbf{Latent Variable Interpolation and Velocity.} During training and inference, the latent variable $z_t$ is linearly interpolated as $z_t = t z_1 + (1 - t) z_0$ for $t \in [0, 1]$, smoothly transitioning from Gaussian noise to the data distribution. The conditional flow model $v_\theta(z_t, t, c)$ predicts the instantaneous velocity $u_t = z_1 - z_0$ at each timestep, utilizing $z_t$ and an auxiliary textual condition $c$.

\textbf{Flow Matching Loss.} The training objective minimizes the mean squared error between the predicted and true velocities:
\begin{equation}
\mathcal{L} = \mathbb{E}_{z_0, z_1, t} \left[ \left\| (z_1 - z_0) - v_\theta(z_t, t, c) \right\|_2^2 \right],
\end{equation}
where $z_1$ is the ground truth motion latent from the token-latent coupling network. This enables the model to accurately bridge the prior and data distributions.

\textbf{Inference and ODE Solver.} Inference begins with $z_0 \sim \mathcal{N}(0, I)$. We solve the equation $\frac{d z_t}{dt} = v_\theta(z_t, t, c)$ over $t \in [0, 1]$ using a fixed-step Euler solver to output the final latent $\hat{z}_1$. We set the total steps to $40$, updating the latent iteratively via $z_{t+1} = z_t + \Delta t \cdot v_\theta(z_t, t, c)$. This step configuration provides an optimal trade-off between high-fidelity motion quality and computational efficiency.

\textbf{Classifier-Free Guidance (CFG).} We apply CFG with a fixed scale $s=2.0$. By concurrently predicting conditional ($v_{\text{cond}}$) and unconditional ($v_{\text{uncond}}$, via zeroed text embeddings) velocities, we guide the generation:
\begin{equation}
v_{\text{cfg}} = v_{\text{uncond}} + s \cdot (v_{\text{cond}} - v_{\text{uncond}}).
\end{equation}

\textbf{Noise Schedule.} By modulating the velocity field $v_\theta(z_t, t, c)$ with time $t$, the noise schedule ensures the generated sequence is progressively refined.

\begin{figure*}[htbp]
    \centering
    \includegraphics[width=\linewidth]{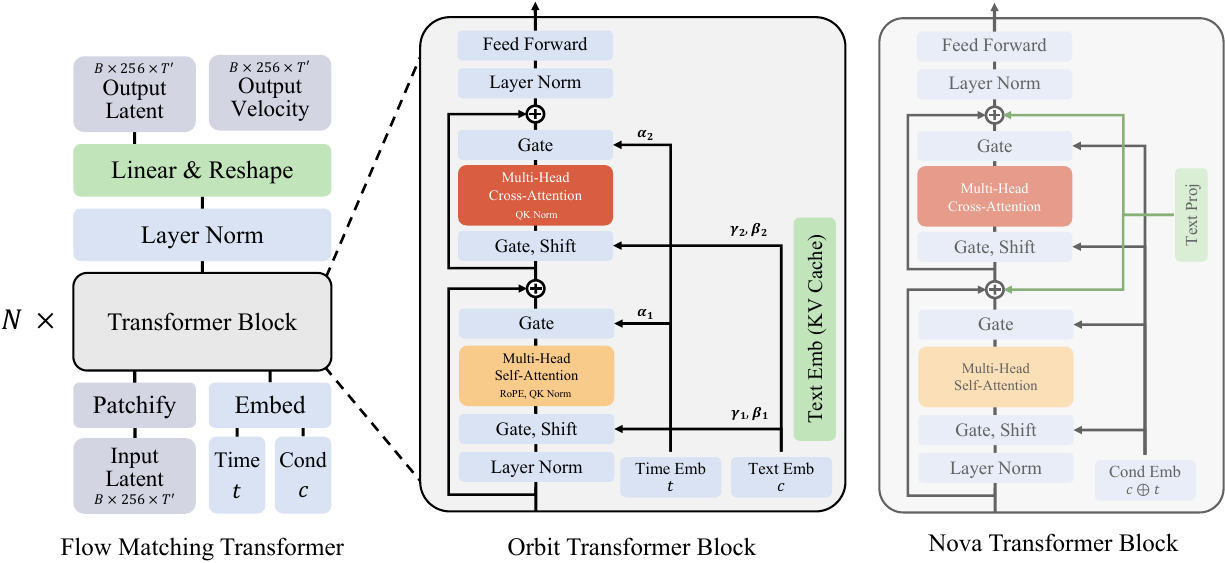}
    \caption{\footnotesize \textbf{Details of our Flow Matching Transformer model.} Given an Input Latent of shape $B\times256\times T^{\prime}$, alongside Time Cond and Text Emb, the initial features are processed through Patchify and Embed layers. These tokens are then passed through $N$ Transformer Blocks, which incorporate Orbit Transformer Block and Nova Transformer Block designs utilizing Multi-Head Self-Attention and Multi-Head Cross-Attention. After passing through the $N$ layers, latent generation proceeds via a Linear \& Reshape layer, ultimately yielding the Output Latent and Output Velocity in their original $B\times256\times T^{\prime}$ dimensions.}
    \label{fig:appendidx_model}
\end{figure*}

\subsection{Model Parameterization}
\label{appendix_2models}
\textbf{Backbone configuration.} We parameterize the conditional vector field using a Transformer-based flow matching network which shown in \Cref{fig:appendidx_model}. The input latent representations are first linearly projected to a hidden dimension $d_{\text{model}}=512$. The network architecture is composed of $15$ Transformer blocks, each utilizing $8$ attention heads and applying a dropout rate of $0.1$. To ensure training stability, we adopt a pre-normalization layout, applying LayerNorm prior to the sub-modules and employing residual connections afterward.

\textbf{Time embedding parameterization.} The time step $t$ is embedded using sinusoidal Fourier features with a maximum period of $10{,}000$. The resulting concatenated $\sin(\cdot)$ and $\cos(\cdot)$ components are then mapped to the hidden dimension $d_{\text{model}}$ through a two-layer multi-layer perceptron (MLP) with an expansion ratio of $4$ (i.e., $512 \rightarrow 2048 \rightarrow 512$). This resulting time embedding serves as a crucial conditioning signal injected into each block, preserving the latent sequence length throughout the network.

\textbf{Condition builder and injection.} Outputs derived from the text encoder are first projected to $d_{\text{model}}$ utilizing a sequence of Linear and LayerNorm layers. Subsequently, a global text summary is extracted via masked mean pooling. This summary representation is then transformed and additively combined with the time step embedding $t$. To mitigate the risk of early over-conditioning during training, the text contribution is scaled by a learnable gating parameter, initialized at $\alpha_{\text{init}}=0.02$. The resulting combined condition vector is injected into every Transformer block, while the fine-grained, token-level text features are retained to serve as memory for the cross-attention mechanisms.

\textbf{Positional encoding and attention masks.} For the latent sequence, we incorporate learned positional embeddings accommodating a maximum length of $1024$. Conversely, the text memory stream does not require additional positional encoding, as the encoder outputs are inherently position-aware. Both the latent and text streams employ key-padding masks to handle variable sequence lengths. To ensure the integrity of the generation process, a latent validity mask is systematically reapplied following each ODE solver update, guaranteeing that the padded positions remain strictly unchanged.

\textbf{Stability-oriented block design.} To further enhance numerical stability within the network, the Orbit blocks apply QK normalization and facilitate the injection of the combined conditioning signal via adaptive layer normalization (AdaLN). Finally, to optimize inference efficiency, we introduce an option to cache the key-value (KV) tensors associated with the frozen text memory. This efficiently circumvents the need to recompute text-side attention across successive ODE solver steps.

\subsection{Construction of the Harder HumanML3D Test Set}
\label{llm_split}
\textbf{Overview.} To rigorously evaluate the robustness and fine-grained synthesis capabilities of text-to-motion models, we construct the Harder-HumanML3D dataset. This subset is derived from the original HumanML3D test set (4,382 samples) by filtering for motions associated with complex, multi-step, or spatially-precise textual descriptions. Our methodology combines heuristic linguistic rules with large language model (LLM) reasoning to ensure both structural complexity and semantic depth.

\begin{table*}[htbp]
    \centering
    \caption{\footnotesize \textbf{Keyword Lexicon for Multi-Dimensional Difficulty Rules.} We define eight linguistic dimensions ($D_1$--$D_8$) to quantify the complexity of motion prompts. Keywords are matched using heuristic rules to compute the initial difficulty score.}
    \label{tab:keyword_lexicon}
    \renewcommand{\arraystretch}{1.2} 
    \begin{tabularx}{\textwidth}{@{} l >{\raggedright\arraybackslash}X @{}}
        \toprule
        \textbf{Dimension} & \textbf{Matching Keywords / Patterns} \\
        \midrule
        $D_1$ \textbf{Direction}  & left, right, forward, backward, clockwise, counterclockwise, sideways, diagonal, north,,south, east, westward, \dots \\
        $D_2$ \textbf{Body Parts} & arm, leg, hand, foot, head, torso, hip, shoulder, elbow, knee, wrist, ankle, spine, neck, finger, chest, back, \dots \\
        $D_3$ \textbf{Modifiers}  & quickly, slowly, slightly, sharply, gently, rapidly, suddenly, gradually, briefly, continuously, repeatedly, \dots \\
        $D_4$ \textbf{Temporal}   & then, while, before, after, simultaneously, followed by, at the same time, next, finally, meanwhile, \dots \\
        $D_5$ \textbf{Count}      & twice, three times, several times, multiple times, once, again, number of, \dots \\
        $D_6$ \textbf{Verbs}      & kick, punch, bend, twist, spin, crawl, kneel, wave, crouch, squat, hop, leap, lunge, stomp, clap, point, rotate, \dots \\
        $D_7$ \textbf{Constraint} & only, specific, particular, certain, exactly, precisely, must, specifically, \dots \\
        $D_8$ \textbf{Spatial}    & upward, downward, inward, outward, horizontal, vertical, perpendicular, parallel, above, below, overhead, \dots \\
        \bottomrule
    \end{tabularx}
\end{table*}

\textbf{Multi-Dimensional Difficulty Rules.} We define eight linguistic dimensions ($D_1$ to $D_8$) to quantify the complexity of motion prompts, targeting specific challenges in motion synthesis such as directional precision, temporal composition, and spatial relations. The detailed keyword lexicon and the assigned heuristic weight $w_i$ for each dimension are summarized in Table~\ref{tab:keyword_lexicon}. Based on this lexicon, the initial heuristic score is calculated as:
\begin{equation}
    S_{rule} = \sum_{i=1}^{8} w_i \cdot \text{count}(D_i)
\end{equation}

\begin{figure*}[htbp]
    \centering
    \includegraphics[width=\linewidth]{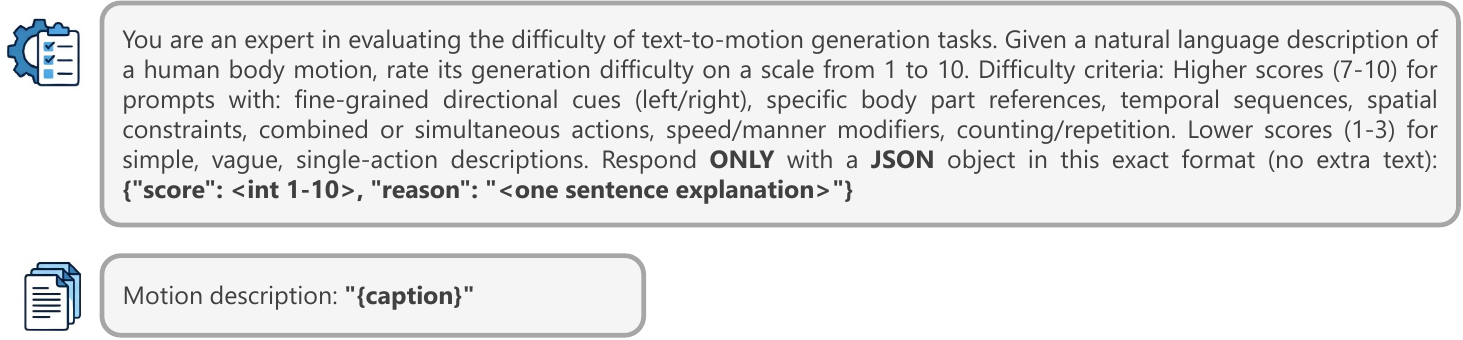}
    \caption{\footnotesize \textbf{Prompt template for Difficulty Oracle.} The system prompt (top) defines linguistic evaluation criteria and enforces JSON output, while the user prompt (bottom) supplies the target caption to enable structured semantic scoring.}
    \label{fig:llm_prompt}
\end{figure*}

\textbf{LLM-Assisted Semantic Scoring.} Recognizing that word counts alone cannot fully capture logical complexity, we employ \texttt{DeepSeek-R1} to oracle the difficulty of the prompts. For candidates with $S_{rule} \geq 4$, the LLM evaluates the prompt's difficulty on a scale of 1–10 and provides a natural language justification. The complete prompt template designed for this evaluation is illustrated in Figure~\ref{fig:llm_prompt}. This crucial step filters out prompts that contain difficulty keywords but remain semantically simple (e.g., ``a person walks left'').

\textbf{Final Selection and Dataset Statistics.} The final difficulty score is a fusion of the heuristic and LLM scores:
\begin{equation}
    S_{final} = S_{rule} + \alpha \cdot S_{llm}
\end{equation}
where we set $\alpha = 1.0$. Following the protocol of Fg-T2M \cite{Fg_T2M}, we select the top 2,582 unique motion sequences with the highest $S_{final}$. 

Compared to the full test set, the Harder-HumanML3D subset exhibits a significantly higher density of directional terms and complex temporal structures, posing a substantial challenge for current state-of-the-art models.


\end{document}